\DeclareSymbolFont{usualmathcal}{OMS}{cmsy}{m}{n}
\DeclareSymbolFontAlphabet{\mathcal}{usualmathcal}
\newcommand{\R}{\mathbb{R}}
\newcommand{\E}{\mathbb{E}}
\newcommand{\N}{\mathbb{N}}
\newcommand{\Id}{\text{Id}_d}
\newcommand{\fwd}{\text{\sc F}}
\newcommand{\rev}{\text{\sc R}}
\def\paragraph{\@startsection{paragraph}{4}%
  \z@\z@{-\fontdimen2\font}%
  {\normalfont\bfseries}}
\newtheorem{theorem}{Theorem}
\newtheorem{proposition}[theorem]{Proposition}
\newtheorem{remark}{Remark}
\newtheorem{algorithma}{Algorithm}
\begin{document}

\begin{center}{\Large \textbf{
Learning to Sample Better
}}\end{center}

\begin{center}
Michael Albergo\textsuperscript{1} and
Eric Vanden-Eijnden\textsuperscript{2}
\end{center}

\begin{center}
{\bf 1} Center for Cosmology and Particle Physics, New York University
\\
{\bf 2} Courant Institute of Mathematical Sciences, New York University

{\small \sf albergo@nyu.edu};  {\small \sf eve2@cims.nyu.edu}
\end{center}



\section*{Abstract}
{\bf
These lecture notes provide an introduction to recent advances in generative modeling methods based on the dynamical transportation of measures, by means of which samples from a simple base measure are mapped to samples from a target measure of interest. Special emphasis is put on the applications of these methods to Monte-Carlo (MC) sampling techniques, such as importance sampling  and Markov Chain Monte-Carlo (MCMC) schemes.  In this context, it is shown how the maps can be learned variationally using data generated by MC sampling, and how they can in turn be used to improve such sampling in a positive feedback loop.  
}

\vspace{10pt}
\noindent\rule{\textwidth}{1pt}
\tableofcontents\thispagestyle{fancy}


\section{Introduction}
\label{sec:intro}
The calculation of expectations and probabilities is a prevalent objective in almost all areas of sciences. The problem can be generically phrased as follows: Given a probability measure $\mu_*$ defined on some sample space $\Omega\subseteq \R^d$ and an observable  $f: \Omega \to \R$, we wish to estimate
\begin{equation}
    \label{eq:def:expect}
    \mu_*(f) := \int_\Omega f(x) d\mu_*(x).
\end{equation}
Choosing $f$ to be the indicator function of some set $A\subset\Omega$, this gives the probability of $A$. Oftentimes the probability measure $\mu_*$ is known only up to a normalization factor, and the estimation of this factor is also desirable: it is known as the partition function in statistical mechanics and the evidence in Bayesian inference. 

The analytical evaluation of~\eqref{eq:def:expect} is possible only in few very specific cases. Standard numerical quadrature methods based on gridding space are also inapplicable as soon as the set $\Omega$ is high dimensional (i.e. $d$ is larger than 3). In practice one must therefore resort to Monte Carlo (MC) methods whereby one estimates~\eqref{eq:def:expect} by replacing it by some empirical average over samples. Since direct sampling of $\mu_*$ is typically not an option either, the most common Monte Carlo strategies  are importance sampling, in which independent samples drawn from a simpler distribution than $\mu_*$ are properly reweighted \cite{Kahn1950a, Kahn1950b} to estimate~\eqref{eq:def:expect}, and Markov Chain Monte Carlo (MCMC), in which a Markov sequence ergodic with respect to $\mu_*$ is used \cite{Rosenbluth1955} ---these two classes of methods will be discussed in more details below. To improve their efficiency, these vanilla MC strategies are typically integrated into more sophisticated methods such as umbrella sampling (aka stratification)  or replica exchange \cite{Swendsen1987, geyer1995}. As a rule, however, to be effective all these approaches must be carefully tailored to the problem at hand using some prior information about~$\mu_*$ and~$f$. As a result the success of Monte Carlo strategies often relies on the skill of the user. The aim of this paper is to discuss how to leverage recent advances in machine learning (ML) to improve this situation and streamline the design of efficient Monte Carlo sampling strategies. 

This is a natural aim considering that machine learning tools and concepts have revolutionized the way we process large data sets, in particular in situations where these data were drawn from complex high-dimensional distributions. This feat is \textit{a~priori} appealing in the context of Monte Carlo sampling where one is confronted with probability distributions with similar features. There is however one important difference. By and large the successes of ML have relied on the prior availability of large quantities of data. In the context of MC, there is typically no data accessible beforehand: in fact, the whole aim of MC is to generate such data in order to make controlled estimation of \eqref{eq:def:expect} possible. This leads to specific challenges, uncommon in the context of supervised learning, but also opportunities to understand better certain aspects of ML training procedures by relying e.g. on online learning in the context of which issues like convergence or  generalization error can be better understood. Our aim is also to discuss these aspects.

Here, we will discuss several schemes that are based on constructing transport maps in order to improve MC sampling: as a rule, these maps push samples from a simple base distribution onto samples that are better adapted to the target distribution. This methodology is applicable in the context of importance sampling as well as MCMC, and it can also be generalized to non-equilibrium sampling. This leads to a feedback loop in which we sample an objective by a MC method to learn a better transport map, and use this map to produce better samples via MC, much in the spirit of reinforcement learning strategies. While the task of devising maps between distributions is foundational in the field of optimal transport \cite{villani2009optimal, santambrogio2015optimal}, the  variational framing of this problem in machine learning has its roots in \cite{chen2000, tabak2010, tabak2013}.
Further progress in this direction have been made in \cite{noe2019, albergo2019, kanwar2020, Nicoli2020, gabrie2022} where the maps used for sampling are parameterized by deep neural networks (DNN), an innovation realized in \cite{dinh2014,dinh2017density,rezende2015}. The possibility to calculate such maps rests on the approximation power of the  DNN used to parametrize them, as well as our ability to optimize the parameters in these networks. 

This paper will offer little insight about the first assumption: we will simply assume that DNN are sufficiently expressive to the task at hand, basing this belief on the growing empirical evidence that this is indeed the case and that the ongoing design of DNN of increasing complexity will keep improving their capabilities. Regarding the question of optimization, we will carefully design objective functions for the maps that are amenable to  empirical estimation via sampling. This guarantees that training of the DNN used to approximate these maps can be performed in practice using online learning with stochastic gradient descent (SGD). The objective functions we introduce have the feature that all their local minimizers are global; however, this nice feature may not be preserved after nonlinear parametrization by a neural network, and it does not necessarily guarantee fast convergence of SGD nor does it easily allow us to obtain convergence rates. Here too, we will leave this question open to further investigations.

\subsection{Organization} This paper is organized as follows.  In Sec.~\ref{sec:transport} we recall some well-know properties of continuity equations and their solution via the method of characteristics. This method involve integrating ordinary differential equations (ODEs) whose solutions give dynamical maps transporting samples from one measure to another. This exposition will balance Eulerian statements at the level of the probability densities with their corresponding Lagrangian interpretations at the level of individual samples. In Sec.~\ref{sec:learn:transp} we show how the velocity field involved in these dynamical maps can be estimated variationally using various objective functions.  In Sec.~\ref{sec:importance} we  discuss how to use these results in the context of importance sampling to control the variance of the weights in the method.  In Sec.~\ref{sec:mcmc} we revisit the same question in the context of Metropolis-Hastings MCMC schemes, illustrating how the convergence of these schemes can be accelerated by assisting them with transport maps. We end with some concluding remarks in Sec.~\ref{sec:conclu}.

\subsection{Notations and Assumptions} To leave technicalities at a minimum, we will assume that the sample space is  $\Omega = \R^d$. We also assume that the measure $\mu_*$ is absolutely continuous with respect to the Lebesgue measure, with a density $\rho_*$ given by
\begin{equation}
    \label{eq:def:mu}
    \rho_*(x) = Z_*^{-1} e^{-U(x)}, \qquad Z_* = \int_{\R^d} e^{-U_*(x)} dx < \infty
\end{equation}
where $U_*:{\R^d} \to \R_+$ is a potential that can be evaluated pointwise in ${\R^d}$, and $Z_*$ is a normalization constant, referred to as the partition function or the evidence, whose value is typically unknown (and we may wish to calculate). We assume that $U_*$ is twice-differentiable, $U_* \in C^2({\R^d})$, and grows sufficiently fast to infinity to guarantee that $Z_*<\infty$. Note that these assumptions on~$U_*$ imply that $\rho_*(x)>0$ for all $x\in{\R^d}$. 

The constructions to follow rely on the introduction of a simpler base measure $\mu_b$, for which we make the same type of assumptions, using $U_b$, $Z_b$, and $\rho_b$ to denotes its potential, normalization constant, and density. A natural choice for this base measure is the standard normal measure, $\mu_b = N(0,\Id)$.
Our  aims below will be to  characterize and sample the target density $\rho_*$ using $\rho_b$ as a reference density, assuming that we can easily draw samples from it .

Given an observable $f:{\R^d} \to \R$, we denote its expectation with respect to $\mu_*$ and $\mu_b$ respectively as $\mu_*(f)$ and $\mu_b(f)$. We also denote by $f\circ T$ the composition of $f$ with some map $T:{\R^d} \to{\R^d}$, i.e. $(f\circ T)(x) = f(T(x))$, and by $T\sharp \mu_b$ the pushforward of the measure $\mu_b$ by the map $T$.

When referring to a test function $f:\R^d\to\R$, we mean a continuously differentiable function with bounded support. 

\section{Moving Densities and Transporting their Samples}
\label{sec:transport}

We now proceed with the problem of measure transportation, which will be the foundation on which we specify means of connecting the density $\rho_b$ of the base measure to the density $\rho_*$ of the target. Much of our considerations will be based on the following classical result~\cite{santambrogio2015optimal,villani2009optimal} that shows how we can move these densities  and transport their samples consistently: 
\begin{proposition}
\label{prop:lem1}
Given some time-dependent velocity field $v:[0,1]\times{\R^d} \to \R^d$,
which we assume bounded and twice differentiable in $(t,x)$, let the PDF $\rho_t$ be the solution of the continuity equation
\begin{equation}
    \label{eq:continuity}
    \partial_t \rho_t = - \nabla \cdot (v \rho_t), \qquad \rho_{t=0} = \rho_b.
\end{equation}
Then for all test functions $f:{\R^d} \to \R$, we have
\begin{equation}
    \label{eq:push}
    \int_{\R^d} f(x) \rho_t(x) dx = \int_{\R^d} f(X_t(x)) \rho_b(x) dx
\end{equation}
where the flow map $X_t:\R^d\to{\R^d}$ is the solution to the ordinary differential equation (ODE)
\begin{equation}
    \label{eq:flow:map}
    \dot X_t(x) = v(t,X_t(x)), \qquad X_{t=0}(x) = x
\end{equation}
Conversely, if the PDF $\rho_t$ satisfies~\eqref{eq:push}, then it is the solution to~\eqref{eq:continuity}. 
\end{proposition}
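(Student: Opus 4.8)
The plan is to prove both directions by reducing the Eulerian identity \eqref{eq:push} to a single pointwise statement along the characteristics. Before anything else, I would record that the hypotheses on $v$ (bounded and twice differentiable) guarantee, by the standard theory of ODEs, that the flow map $X_t$ in \eqref{eq:flow:map} is well defined for all $t\in[0,1]$, is a $C^1$ diffeomorphism of ${\R^d}$, and has a strictly positive Jacobian $J_t(x):=\det \nabla_x X_t(x)$ with $J_0\equiv 1$. This regularity is what lets me change variables freely and differentiate under the integral sign throughout, which is legitimate because $f$ has bounded support and all integrands are continuous.

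For the forward direction I would start from the left side of \eqref{eq:push} and substitute $x=X_t(y)$, turning it into $\int_{\R^d} f(X_t(y))\,\rho_t(X_t(y))\,J_t(y)\,dy$. Comparing with the right side, the identity \eqref{eq:push} is then equivalent to the pointwise claim that $m_t(y):=\rho_t(X_t(y))\,J_t(y)$ equals $\rho_b(y)$ for all $y$ and all $t$. Since $m_0(y)=\rho_b(y)$, it suffices to show $\partial_t m_t\equiv 0$. The key auxiliary fact I would establish first is the Jacobi (Liouville) formula $\partial_t J_t(y)=(\nabla\cdot v)(t,X_t(y))\,J_t(y)$; this follows by differentiating \eqref{eq:flow:map} in $y$ to get $\partial_t \nabla_y X_t=(\nabla_x v)(t,X_t)\,\nabla_y X_t$, and then applying $\tfrac{d}{dt}\det A_t=\det A_t\,\tr(A_t^{-1}\dot A_t)$ with $A_t=\nabla_y X_t$. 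Differentiating $m_t$ by the product and chain rules and inserting the continuity equation in the form $\partial_t\rho_t=-(\nabla\cdot v)\rho_t-v\cdot\nabla\rho_t$, the flow equation $\dot X_t=v(t,X_t)$, and the Jacobi formula, I expect the convective term $-v\cdot\nabla\rho_t$ to cancel against the chain-rule contribution $\nabla\rho_t\cdot v$ and the two divergence terms to cancel as well, leaving $\partial_t m_t=0$.

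For the converse I would assume \eqref{eq:push} and differentiate it in $t$. On the right side, the chain rule together with \eqref{eq:flow:map} gives $\tfrac{d}{dt}\int f(X_t)\rho_b\,dx=\int (v\cdot\nabla f)(X_t)\,\rho_b\,dx$, and applying \eqref{eq:push} once more, now with the test function $v\cdot\nabla f$ (which is again $C^1$ with bounded support, since $f$ is and $v$ is smooth and bounded), rewrites this as $\int (v\cdot\nabla f)\,\rho_t\,dx$. On the left side I obtain $\int f\,\partial_t\rho_t\,dx$. Integrating the right-hand expression by parts, with no boundary terms by the compact support of $f$, yields $\int f\,\partial_t\rho_t\,dx=-\int f\,\nabla\cdot(v\rho_t)\,dx$ for every test function $f$, whence $\partial_t\rho_t=-\nabla\cdot(v\rho_t)$ pointwise by smoothness. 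Setting $t=0$ in \eqref{eq:push} gives $\int f\rho_0\,dx=\int f\rho_b\,dx$ for all test $f$, i.e. $\rho_0=\rho_b$, recovering the initial condition.

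I expect the main obstacle to be the careful derivation and bookkeeping of the Jacobi formula and the ensuing cancellations in $\partial_t m_t$. The diffeomorphism and differentiation-under-the-integral technicalities are routine given the stated smoothness and boundedness of $v$, but they should be flagged explicitly so that the change of variables and the interchange of $\partial_t$ with $\int$ are rigorously justified.
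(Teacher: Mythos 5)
Your proof is correct. The converse direction (that \eqref{eq:push} implies \eqref{eq:continuity}) is the same argument as the paper's: differentiate \eqref{eq:push} in time, use the chain rule together with \eqref{eq:flow:map}, reapply \eqref{eq:push} to the function $v(t,\cdot)\cdot\nabla f$, and recognize the weak form of the continuity equation, with the initial condition recovered at $t=0$. Where you genuinely differ is in the forward direction. The paper integrates the transport identity $\frac{d}{dt}\rho_t(X_t(x)) = -\nabla\cdot v(t,X_t(x))\,\rho_t(X_t(x))$ to obtain the explicit formula \eqref{eq:rhot:rho0}, $\rho_t(X_t(x))\exp\bigl(\int_0^t \nabla\cdot v(\tau,X_\tau(x))\,d\tau\bigr)=\rho_b(x)$, and then changes variables $y=X_t(x)$, \emph{asserting} that the exponential factor is the Jacobian of that change of variables. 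You instead prove Jacobi's (Liouville's) formula $\partial_t J_t = (\nabla\cdot v)(t,X_t)\,J_t$ as a lemma and show that the pushforward density $m_t=\rho_t(X_t)\,J_t$ is conserved. The two routes are equivalent---solving the linear ODE in Jacobi's formula gives exactly $J_t=\exp\bigl(\int_0^t \nabla\cdot v(\tau,X_\tau)\,d\tau\bigr)$, so your conserved quantity is precisely the paper's identity \eqref{eq:rhot:rho0}---but your version supplies the proof of the Jacobian fact that the paper leaves implicit, which is a gain in completeness; the paper's formulation has the advantage that the exponential formula it derives is reused verbatim downstream, e.g.\ in the importance-sampling weights of Proposition~\ref{prop:expect:is:0}. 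One shared technicality worth flagging: both you and the paper apply \eqref{eq:push} to $v(t,\cdot)\cdot\nabla f$, which under the paper's convention (test functions are $C^1$ with bounded support) is only continuous rather than $C^1$ as your parenthetical claims, since $\nabla f$ is merely continuous; this is harmless and is repaired either by taking test functions in $C^2$ or by a density argument, but the justification as you wrote it is slightly off.
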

\noindent
Equation~\eqref{eq:flow:map} is referred to as the probability flow ODE.

\begin{proof} The assumptions on $v_t$ guarantee global existence and uniqueness of the solution to~\eqref{eq:continuity} and \eqref{eq:flow:map}.
Taking the time derivative of~\eqref{eq:push} gives
\begin{equation}
    \label{eq:push2}
    \begin{aligned}
     \int_{\R^d} f(x) \partial_t \rho_t(x) dx &= \int_{\R^d} \dot X_t(x) \cdot \nabla f(X_t(x)) \rho_{t=0}(x) dx\\
     & = \int_{\R^d} v(t,X_t(x)) \cdot \nabla f(X_t(x)) \rho_{t=0}(x) dx \\
     & = \int_{\R^d} v(t,x) \cdot \nabla f(x) \rho_{t}(x) dx
    \end{aligned}
\end{equation}
where we used~\eqref{eq:flow:map} to get the second equality, and~\eqref{eq:push} to get the third. Equation \eqref{eq:push2} is nothing but the PDE in~\eqref{eq:continuity} written in weak form, which shows that if $\rho_t$ satisfies~\eqref{eq:push} then it solves~\eqref{eq:continuity}. Conversely, assume that $\rho_t$ satisfies~\eqref{eq:continuity} and consider
\begin{equation}
    \label{eq:dtrhot}
    \frac{d}{dt} \rho_t(X_t(x)) = \partial_t \rho_t(X_t(x)) + v(t,X_t(x)) \cdot \nabla \rho_t(X_t(x)) = - \nabla \cdot v(t,X_t(x)) \rho_t(X_t(x))
\end{equation}
where we used the chain rule along with the ODE~\eqref{eq:flow:map} to get the first equality and~\eqref{eq:continuity} to get the second. Integrating~\eqref{eq:dtrhot} in time using $\rho_{t=0} = \rho_b $ gives
\begin{equation}
    \label{eq:rhot:rho0}
    \rho_{t}(X_{t}(x)) \exp\left( \int_0^t \nabla \cdot v(\tau,X_\tau(x)) d\tau\right)= \rho_b(x) 
\end{equation}
which implies that 
\begin{equation}
    \label{eq:rhot:rho0:int}
    \int_{\R^d} f(X_t(x)) \rho_{t}(X_{t}(x)) \exp\left(\int_0^t \nabla \cdot v(\tau,X_\tau(x)) d\tau\right) dx = \int_{\R^d} f(X_t(x))\rho_b(x) dx.
\end{equation}
Using $y=X_t(x)$ as new integration variable for the integral at the left hand side and noting that the exponential factor is the Jacobian of this change of variable we arrive at~\eqref{eq:push}. 
\end{proof}

The interest of this result lies in the fact that we can in principle find a velocity field~$v$ that allows us to impose, in addition to the initial condition $\rho_{t=0} = \rho_b$, the final condition $\rho_{t=1}=\rho_*$ on the solution to~\eqref{eq:continuity}. The existence of such a $v$ is guaranteed by general theorems in optimal transport theory, see e.g.~\cite{brenier1987, caffarelli2000}, though here we do not necessarily require optimality in the sense of Monge-Amp\`ere. For example given any $\phi:{\R^d} \to \R$ that satisfies the Poisson equation
\begin{equation}
    \Delta \phi = \rho_b - \rho_*,
\end{equation}
it is easy to see that we can take
\begin{equation}
    v(t,x) = \frac{\nabla \phi(x)} {(1-t) \rho_b(x)+t\rho_*(x)}
\end{equation}
to guarantee that $\rho_t = (1-t) \rho_b + t \rho_*$ satisfies~\eqref{eq:continuity} as well as $\rho_{t=1}=\rho_*$. This is called Daracogna-Moser transport \cite{DACOROGNA19901} (which we will not use here).

With a $v$ such that $\rho_t$ satisfies both~\eqref{eq:continuity} and the final condition $\rho_{t=1}=\rho_*$  we can calculate the expectation of $f$ with respect to the target mesaure as the expectation of $f\circ X_{t=1}$ with respect to the base measure, since~\eqref{eq:push} at $t=1$ with $\rho_{t=1}=\rho_*$ can be compactly written as
\begin{equation}
    \label{eq:ideal:is}
    \mu_*(f) = \mu_b(f\circ X_{t=1})
\end{equation}
In other words, using the map $X_{t=1}$ allows one to push forward samples from the base measure~$\mu_b$ onto samples from the target~$\mu_*$. Since our working assumption is that sampling~$\mu_b$ is straightforward, using this procedure would in principle solve the problem of sampling~$\mu_*$.

Of course, the simplicity of \eqref{eq:ideal:is} is deceptive, since this equality relies on using a velocity field~$v$ which is unknown to us \textit{a~priori}. However, the identification of this velocity is amenable to  variational formulations that can be implemented in practice, and used in concert with MC methods such as importance sampling and Metropolis-Hastings MCMC. We discuss these variational formulations in Sec.~\ref{sec:learn:transp}, then how to exploit them in the context of importance sampling and Metropolis-Hastings Markov Chain Monte Carlo methods in Secs.~\ref{sec:importance} and \ref{sec:mcmc}, respectively. 

\section{Learning the Transport Variationally}
\label{sec:learn:transp}

From the results in Sec.~\ref{sec:transport} we know that we can in principle find a velocity $v_t$ such that the solution of~\eqref{eq:flow:map} gives $X_{t=1}\sharp\mu_b = \mu_*$. To obtain such a $v$, a natural idea is to minimize an objective function which enforces the final condition $\rho_{t=1} = \rho_*$.
In Secs.~\ref{sec:varia:RKL} and~\ref{sec:varia:KL} we discuss two choices for this objective and the variational problems they lead to.
As we will see, 
the solution to these variational problems 
requires the use of an adjoint method in which the probability flow ODE~\eqref{eq:flow:map} and an adjoint equation must be solved to compute the gradient of the objective \cite{chen2018, grathwohl2018scalable}. This is costly.  An alternative class of variational problems is based on building a connection that enforces the boundary conditions $\rho_{t=0}=\rho_B$ and $\rho_{t=1}= \rho_*$ exactly, which allows for direct estimation of the velocity field $v$ by solution of a quadratic regression problem, avoiding the adjoint method. Schemes with this feature are said to be `simulation-free' in the ML literature. We will discuss two methods in this class: score-based diffusion models (SBDM) \cite{ho2020, Song2019, song2021scorebased, dickstein2015}, which have revolutionized tasks in image generation and are presented in Sec.~\ref{sec:sbdm}, and stochastic interpolant methods (SIM), which are a generalization of SBDM with additional design flexibility and are presented in Sec.~\ref{sec:interp}. 

\subsection{Minimizing a Divergence Measure between $\rho_{t=1}$ and $\rho_*$}
Here we discuss variational problems in which the velocity field is obtained by minimization an $f$-divergence, taken either of $\rho_*$ from $\rho_{t=1}$ (in which case it takes the form of an expectation over $\rho_{t=1}$), or of $\rho_{t=1}$ from  $\rho_*$ (in which case it involves an expectation over $\rho_*$).  
We refer to the first as a reverse $f$-divergence and the second as a forward. We consider the benefits and tradeoffs of each below. In Secs.~\ref{sec:varia:RKL} and~\ref{sec:varia:KL} we study the reverse and forward Kullback-Leibler (KL) divergences, then in Sec.~\ref{sec:other:div} we consider other divergences such as $\chi^2$.

\subsubsection{Using the Reverse KL Divergence as Objective}
\label{sec:varia:RKL}
The reverse KL divergence reads as
\begin{equation}
    \label{eq:kl}
    \int_{\R^d} \log \left(\frac{\rho_{t=1}(x)}{\rho_*(x)}\right) \rho_{t=1}(x) dx = \int_{\R^d} \left(\log \left(\rho_{t=1}(x)\right) + U_*(x)\right) \rho_{t=1}(x) dx +C
\end{equation}
where we used $\rho_* = Z_*^{-1} e^{-U_*}$ and $C_*=\int_{\R^d} \log Z_* \, \rho_{t=1}dx = \log Z_*$ is a constant in $\rho_{t=1}$. Neglecting this constant and using the remainder as objective function leads us to:

\begin{proposition}
\label{prop:var:1}
Fix $U_*:{\R^d} \to \R_+$ and $\rho_b$, and consider the variational problem
\begin{equation}
    \label{eq:var:1}
    \begin{aligned}
    \min &\int_{\R^d} \left(\log \left(\rho_{t=1}(x)\right) + U_*(x)\right) \rho_{t=1}(x) dx \\
    \text{subject to:} & \quad \partial_t \rho_t = - \nabla \cdot (v \rho_t), \qquad \rho_{t=0} = \rho_b.
    \end{aligned}
\end{equation}
Then  all local minimizing pairs $(v,\rho_t)$ are such that $\rho_{t=1} = \rho_* = Z_*^{-1} e^{-U_*}$.
\end{proposition}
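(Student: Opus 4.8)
The plan is to treat \eqref{eq:var:1} as an equality-constrained optimization problem over the pair $(v,\rho_t)$, derive its first-order optimality (Euler--Lagrange) conditions, and then show that these conditions alone force $\rho_{t=1}=\rho_*$. As a guiding observation, the objective equals $\mathrm{KL}(\rho_{t=1}\,\|\,\rho_*)-\log Z_*$, the additive constant being exactly the $C$ appearing in \eqref{eq:kl}; hence its global minimum is attained precisely at $\rho_{t=1}=\rho_*$. The substance of the proposition is therefore that there are no spurious local minimizers, and I would establish this variationally rather than by appealing to the global picture.

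First I would compute the first variation. Perturbing the velocity field $v\mapsto v+\epsilon w$ by a smooth, compactly supported $w$ (so that $v+\epsilon w$ stays in the admissible class of Proposition~\ref{prop:lem1}) and writing $\eta_t=\partial_\epsilon\rho_t|_{\epsilon=0}$, the constraint \eqref{eq:continuity} linearizes to
\begin{equation}
\partial_t\eta_t=-\nabla\cdot(v\eta_t)-\nabla\cdot(w\rho_t),\qquad \eta_{t=0}=0 .
\end{equation}
Differentiating the objective and using $\tfrac{\delta}{\delta\rho}\int(\rho\log\rho+U_*\rho)\,dx=\log\rho+1+U_*$, the stationarity condition at a local minimizer becomes
\begin{equation}
\int_{\R^d}\big(\log\rho_{t=1}+1+U_*\big)\,\eta_{t=1}\,dx=0\qquad\text{for every admissible }w .
\end{equation}
Mass conservation built into \eqref{eq:continuity} gives $\int_{\R^d}\eta_{t=1}\,dx=0$ automatically, so no separate normalization multiplier is needed; this is why the argument will pin down $\log\rho_{t=1}+U_*$ only up to a constant.

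The crux is to convert this integral condition into a pointwise statement, and I would do so with an adjoint (backward transport) argument rather than by inverting the map $w\mapsto\eta_{t=1}$ directly. Introduce $\psi_t$ solving the backward problem $\partial_t\psi_t+v\cdot\nabla\psi_t=0$ with terminal data $\psi_{t=1}=\log\rho_{t=1}+1+U_*$. Computing $\tfrac{d}{dt}\int\psi_t\eta_t\,dx$, integrating by parts in $x$, and using the two evolution equations makes the $v$-transport terms cancel, leaving
\begin{equation}
\int_{\R^d}\big(\log\rho_{t=1}+1+U_*\big)\,\eta_{t=1}\,dx=\int_0^1\!\!\int_{\R^d}\rho_t\,\big(w\cdot\nabla\psi_t\big)\,dx\,dt .
\end{equation}
Since the right-hand side must vanish for all $w$ and $\rho_t>0$ everywhere (being the pushforward of the positive density $\rho_b$ by the diffeomorphism $X_t$), the continuous field $\rho_t\nabla\psi_t$ is orthogonal to all test vector fields, hence $\nabla\psi_t\equiv0$ on $(0,1)\times\R^d$. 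Feeding $\nabla\psi_t=0$ back into the transport equation gives $\partial_t\psi_t=0$, so $\psi_t$ is constant in both $t$ and $x$; evaluating at $t=1$ yields $\log\rho_{t=1}+U_*=\text{const}$, i.e. $\rho_{t=1}=C\,e^{-U_*}$, and normalization forces $C=Z_*^{-1}$, so $\rho_{t=1}=\rho_*$.

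I expect the main obstacle to be the controllability/adjoint step together with its regularity bookkeeping: justifying that $\eta_t$ exists and depends differentiably on $\epsilon$, that the integrations by parts carry no boundary contributions (decay of $\rho_t$, $\eta_t$, $\psi_t$ at infinity), and that the backward transport problem for $\psi_t$ is well posed with the prescribed terminal data. The boundedness and $C^2$ regularity of $v$ assumed in Proposition~\ref{prop:lem1}, together with $\rho_t>0$, should supply what is needed, but the pairing of the (a priori only integrable) function $\log\rho_{t=1}+U_*$ against the admissible perturbations $\eta_{t=1}$ is the point requiring care. As an independent sanity check one could argue via convexity instead: the objective is strictly convex in $\rho_{t=1}$ (the entropy $\int\rho\log\rho$ is strictly convex and $\int U_*\rho$ is linear), so once the reachable-density map is known to be locally surjective near the optimum---again a controllability statement---any local minimizer must coincide with the unique global one.
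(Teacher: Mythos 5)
Your proof is correct and takes essentially the same route as the paper: your adjoint $\psi_t$ is precisely the Lagrange multiplier $g_t$ in the paper's extended objective, obeying the same backward transport equation $\partial_t g_t = -v\cdot\nabla g_t$ with terminal data $\log\rho_{t=1}+U_*$ (the extra additive constant $1$ is harmless), and the decisive mechanism---stationarity in $v$ forcing $\rho_t\nabla\psi_t=0$, positivity of $\rho_t$ making $\psi$ constant in space and hence in time, and normalization fixing the constant to give $\rho_{t=1}=Z_*^{-1}e^{-U_*}$---is identical. The only difference is presentational: you derive the first-order conditions from an explicit first variation with the linearized constraint and an adjoint pairing, whereas the paper reads the same conditions off as the Euler--Lagrange equations of the Lagrangian, so your version merely fills in a derivation the paper leaves implicit.
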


\noindent
Note that the objective in~\eqref{eq:var:1} contains the unormalized log-density $U_*$ of the target mesaure $\mu_*$, so it is only accessible if this structural information is available.
\begin{proof}
We can enforce the PDE constraint by using a Lagrange multiplier $g_t(x)$ and minimizing the extended objective
\begin{equation}
    \label{eq:extended:obj}
    \int_{\R^d} \left(\log \left(\rho_{t=1}\right) + U_*\right) \rho_{t=1} dx - \int_0^1 \int_{\R^d} g_t\left(\partial_t \rho_t + \nabla \cdot (v\rho_t) \right) dx dt 
\end{equation}
The critical points of this objective over the triple $(v,\rho_t,g_t)$ under the constraint that $\rho_{t=0} = \rho_b$ satisfy the Euler-Lagrange equations
\begin{equation}
    \label{eq:EL:1}
    \begin{aligned}
    & \partial_t \rho_t = - \nabla \cdot (v\rho_t), \qquad &&\rho_{t=0} = \rho_b,\\
    & \partial_t g_t = - v\cdot \nabla g_t, \qquad &&g_{t=1} = \log \left(\rho_{t=1}\right) + U_*,\\
    & 0 = \rho_t \nabla g_t &&
    \end{aligned}
\end{equation}
Since $\rho_t>0$, the last equation means that $g_t$ must be constant in space, which from the second equation implies that it must be constant in time too, $g_t(x) = C$. Making use of this fact at time $t=1$ allows us to write  $\log \left(\rho_{t=1}(x)\right) =  -U_*(x) +C$, with $C=-\log Z_*$ since $\rho_t$ is normalized at all times. This proves Proposition~\ref{prop:var:1} as long as a velocity field $v$ exists such that~\eqref{eq:continuity} can be satisfied together with $\rho_{t=1}=\rho_*$, which we already know is the case. 
\end{proof}

\subsubsection*{Lagrangian Formulation}

Proposition~\ref{prop:var:1} can be written in a form that involves an expectation over the base distribution, and is therefore suitable for numerical evaluation. This is a primary appeal of using $f$-divergences in reverse form. To this end, we can use~\eqref{eq:push} to rewrite the objective as
\begin{equation}
    \label{eq:new:obj}
    \begin{aligned}
    &\int_{\R^d} \left(\log \left(\rho_{t=1}(x)\right) + U_*(x)\right) \rho_{t=1}(x) dx \\
    &= \int_{\R^d} \left(\log \left(\rho_{t=1}(X_{t=1}(x))\right) + U_*(X_{t=1}(x))\right) \rho_b(x) dx
    \end{aligned}
\end{equation}
This expression can be simplified by using~\eqref{eq:rhot:rho0} with $t=1$ written as:
\begin{equation}
    \label{eq:rhot:rho0:t1}
    \rho_{t=1}(X_{t=1}(x)) = \rho_b(x) \exp\left( -\int_0^1 \nabla \cdot v(t,X_t(x)) dt\right)
\end{equation}
Inserting this equation in~\eqref{eq:new:obj} gives
\begin{equation}
    \label{eq:new:obj:2}
    \begin{aligned}
    &\int_{\R^d} \left(\log \left(\rho_{t=1}(x)\right) + U_*(x)\right) \rho_{t=1}(x) dx \\
    &= \int_{\R^d} \left( U_*(X_{t=1}(x)) -\int_0^1 \nabla \cdot v(t,X_t(x)) dt \right) \rho_b(x) dx + C_b
    \end{aligned}
\end{equation}
where $C_b = \int_{\R^d} \log \rho_b \, \rho_b dx$. Since this constant is fixed in the optimization, we arrive at the following Lagrangian reformulation of the variational problem in Proposition~\ref{prop:var:1}:

\begin{proposition}
\label{prop:var:2}
Fix $U_*:{\R^d} \to \R_+$ and $\rho_b$, and consider the variational problem
\begin{equation}
    \label{eq:var:2}
    \begin{aligned}
    \min & \int_{\R^d} \left( U(X_{t=1}(x)) -\int_0^1 \nabla \cdot v(t,X_t(x)) dt \right) \rho_b(x) dx  \\
    \text{subject to:}& \quad\dot X_t(x) = v_t(X_t(x)), \qquad X_{t=0}(x) = x.
    \end{aligned}
\end{equation}
Then, for all local minimizing pairs $(v,X_t)$, we have $X_{t=1}\sharp\mu_b = \mu_*$, where $\mu_*$ and $\mu_b$ are the measures whose densities are $\rho_*=  Z_*^{-1} e^{-U_*}$ and $\rho_b= Z_b^{-1} e^{-U_b}$, respectively.  
\end{proposition}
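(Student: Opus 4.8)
The plan is to recognize that the problem in~\eqref{eq:var:2} is not a genuinely new optimization but an exact reparametrization of the one in Proposition~\ref{prop:var:1}, so that its minimizers can simply be read off from the latter. Both problems are optimizations over a single control, the velocity field $v$: in~\eqref{eq:var:1} the density $\rho_t$ is the functional of $v$ determined by the continuity equation~\eqref{eq:continuity} with $\rho_{t=0}=\rho_b$, while in~\eqref{eq:var:2} the flow $X_t$ is the functional of $v$ determined by the ODE~\eqref{eq:flow:map} with $X_{t=0}=x$. The content of Proposition~\ref{prop:lem1} is precisely that, for a fixed $v$, these two objects are compatible, $\rho_t=X_t\sharp\rho_b$; hence the two feasible sets are one and the same once identified through $v$.

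First I would record that, as a functional of $v$, the objective in~\eqref{eq:var:2} equals the objective in~\eqref{eq:var:1} up to the additive constant $C_b=\int_{\R^d}\log\rho_b\,\rho_b\,dx$. This is exactly the algebra already carried out in~\eqref{eq:new:obj}--\eqref{eq:new:obj:2}: one rewrites the expectation over $\rho_{t=1}$ as an expectation over $\rho_b$ using the pushforward identity~\eqref{eq:push}, then eliminates $\rho_{t=1}(X_{t=1}(x))$ via the Jacobian relation~\eqref{eq:rhot:rho0:t1}, so that $\log\rho_{t=1}(X_{t=1}(x))=\log\rho_b(x)-\int_0^1\nabla\cdot v(t,X_t(x))\,dt$. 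The $\log\rho_b$ term integrates against $\rho_b$ to the $v$-independent constant $C_b$, leaving precisely the integrand of~\eqref{eq:var:2}.

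Next I would conclude the equivalence of the two optimizations. Since the two objectives differ only by a constant over feasible sets that are identified through the single variable $v$, they share the same critical points and the same local (and global) minimizers in $v$: for any topology placed on the admissible velocity fields, $J(v)$ and $J(v)+C_b$ have identical level sets and hence identical local minima. Thus every local minimizing pair $(v,X_t)$ of~\eqref{eq:var:2} corresponds to a local minimizing pair $(v,\rho_t)$ of~\eqref{eq:var:1}, to which Proposition~\ref{prop:var:1} applies and yields $\rho_{t=1}=\rho_*$. Finally, invoking Proposition~\ref{prop:lem1} once more to write $\rho_{t=1}=X_{t=1}\sharp\rho_b$, the identity $\rho_{t=1}=\rho_*$ becomes exactly $X_{t=1}\sharp\mu_b=\mu_*$, which is the claim.

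The step I expect to require the most care is making the phrase ``local minimizer'' transfer cleanly between the two formulations. The safe route is to treat both as unconstrained problems in $v$ alone, with $\rho_t$ and $X_t$ eliminated as explicit functionals of $v$; then the equality of the two objectives up to a constant makes the coincidence of local minimizers immediate and independent of the chosen function-space topology. A more self-contained but more laborious alternative would bypass Proposition~\ref{prop:var:1} entirely and derive the Euler--Lagrange/adjoint optimality conditions directly for~\eqref{eq:var:2}, introducing an adjoint variable conjugate to $X_t$; this would reach the same conclusion but essentially re-derive the content of Proposition~\ref{prop:var:1} in Lagrangian coordinates, so I would prefer the reduction above.
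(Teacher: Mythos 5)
Your proposal is correct and follows essentially the same route as the paper's own proof: rewrite the Eulerian objective of Proposition~\ref{prop:var:1} as the Lagrangian objective of~\eqref{eq:var:2} plus the constant $C_b$ via~\eqref{eq:new:obj}--\eqref{eq:new:obj:2}, then transfer local minimizers and the conclusion through the Eulerian--Lagrangian equivalence of Proposition~\ref{prop:lem1}. Your added care about viewing both problems as unconstrained optimizations in $v$ alone, so that ``local minimizer'' transfers unambiguously, is a sensible sharpening of a point the paper leaves implicit.
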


\begin{proof} Proposition~\ref{prop:var:2} follows from Proposition~\ref{prop:var:1} using the identity~\eqref{eq:new:obj:2} to re-express the objective function and using the equivalence between the Eulerian formulation involving $\rho_t$ and the Lagragian formulation involving $X_t$ that follows from Proposition~\ref{prop:lem1}.
\end{proof}

\subsubsection*{Data-Free Implementation}

Suppose that we approximate the function $v(t,x)$ for all $(t,x)\in[0,1]\times{\R^d}$ via a parametric representation $v^\theta(t,x)$ with parameters $\theta \in \Theta$---for example we could use a deep neural network (DNN), in which case $\theta$ contains all the network weights to adjust.  The constraint minimization problem~\eqref{eq:var:2} is amenable to solution via the adjoint method, by considering the extended objective, now viewed as a loss function in $\theta$
\begin{equation}
    \label{eq:extended:obj:2}
    \begin{aligned}
    L^\rev(\theta) = & \int_{\R^d} \left( U_*(X_{t=1}(x)) -\int_0^1 \nabla \cdot v^\theta(t,X_t(x)) dt \right) \rho_b(x) dx\\
    & - \int_0^1 \int_{\R^d}   G_t(x) \cdot (\dot X_t(x) - v^\theta(t,X_t(x))) \rho_b(x)  dx dt
    \end{aligned}
\end{equation}
where $G_t$ is a Lagrange multiplier.
The gradient of this objective then reads
\begin{equation}
    \label{eq:grad:para:obj}
    \partial_\theta L^\rev(\theta) = \int_0^1 \int_{\R^d} \left(\partial_\theta  \nabla \cdot v^\theta(t,X_t(x)) -  \partial_\theta v^\theta(t,X_t(x)) \cdot G_t(x)  \right)\rho_b(x) dx dt
\end{equation}
where the pair $(X_t(x),G_t(x))$ solves
\begin{equation}
    \label{eq:EL:2}
    \begin{aligned}
    & \dot X_t(x)= v^\theta(t,X_t(x)), \quad &&X_{t=0}(x) = x,\\
    & \dot G_t(x) = - [\nabla v^\theta(t,X_t(x))]^T G_t(x) + \nabla \nabla \cdot v^\theta(t,X_t(x)) , \quad &&G_{t=1}(x) = \nabla U_*(X_{t=1}(x)).
    \end{aligned}
\end{equation}
Note that we can solve the first equation in~\eqref{eq:EL:2} forward in time from $X_{t=0}(x)=x$ first, then use $X_{t=1}(x)$ to solve the second equation backward in time from  $G_{t=1}(x)=\nabla U_*(X_{t=1}(x))$. 

Equations~\eqref{eq:grad:para:obj} and~\eqref{eq:EL:2} can be used for optimization via stochastic gradient descent (SGD) using the following scheme:

\begin{algorithma}
\label{alg:1}
Start with an initial guess $\theta_0$ for the parameters, then for $k\ge0$: 

\noindent
(i) Draw $n$ data points $\{x^b_i\}_{i=1}^n$ from $\mu_b$ and estimate the gradient of the loss via
\begin{equation}
    \label{eq:grad:para:obj:n}
    \partial_\theta L^\rev_n(\theta_k) = \frac1n \sum_{i=1}^n \int_0^1 \left(\partial_\theta  \nabla \cdot v^{\theta_k}(t,X^{i}_t) -  \partial_\theta v^{\theta_k}(t,X^i_t) \cdot G^i_t  \right) dt
\end{equation}
where $\{(X^i_t,G^i_t)\}_{i=1}^n$ solve
\begin{equation}
    \label{eq:EL:3i}
    \begin{aligned}
    & \dot X^i_t= v^{\theta_k}(t,X^i_t), \quad &&X^i_{t=0} = x^b_i,\\
    & \dot G_t^i = - [\nabla v^\theta(t,X_t^i)]^T G_t^i+ \nabla \nabla \cdot v^\theta(t,X_t^i) , \quad &&G_{t=1}^i = \nabla U_*(X_{t=1}^i).
    \end{aligned}
\end{equation} 

\noindent
(ii) Use this estimate of the gradient to perform a step of SGD and update the parameters via
\begin{equation}
    \label{eq:sgd:is}
    \theta_{k+1} = \theta_k - h_k \partial_\theta L^\rev_n(\theta_k) 
\end{equation}
where $h_k>0$ is some learning rate.

\end{algorithma}

\noindent 
Note that we have infinite query access to $\mu_b$ so that the number $n$ of samples can be made as large as the cap on the computational cost allows, and it can also be changed from iteration to iteration.

In practice, the solution of~\eqref{eq:EL:2} requires some time discretization: this introduces some numerical error  which can be controlled by using appropriate integrators for ODEs (e.g. Runge-Kutta). The finite value of $n$ is also a source of error: it can be controlled by reducing the learning rate as the training progresses, using e.g. the scheduling used in Robbins–Monro's stochastic approximation algorithm.  In Sec.~\ref{sec:practical:is} we discuss how to integrate Algorithm~\ref{alg:1} into an importance sampling scheme.

\subsubsection{Using the Forward KL Divergence as Objective}
\label{sec:varia:KL}

The KL divergence of $\rho_*$ from $\rho_{t=1}$, henceforth referred to as the forward KL divergence, reads as
\begin{equation}
    \label{eq:kl:d}
    \int_{\R^d} \log \left(\frac{\rho_*(x)}{\rho_{t=1}(x)}\right) \rho_*(x) dx = -\int_{\R^d} \log \left(\rho_{t=1}(x)\right) \rho_*(x) dx +C'
\end{equation}
where $C'=\int_{\R^d} \log \rho_* \, \rho_* dx$ is a constant in $\rho_{t=1}$. Neglecting this constant and using the remainder as objective function leads us to:

\begin{proposition}
\label{prop:var:1:d}
Fix $\rho_b$ and $\rho_*$, and consider the variational problem
\begin{equation}
    \label{eq:var:1:d}
    \begin{aligned}
    \max &\int_{\R^d} \log \left(\rho_{t=1}(x)\right)  \rho_*(x) dx \\
    \text{subject to:} & \quad \partial_t \rho_t = - \nabla \cdot (v \rho_t), \qquad \rho_{t=0} = \rho_b.
    \end{aligned}
\end{equation}
Then, for  all local maximizing pairs $(v,\rho_t)$, we have $\rho_{t=1} = \rho_* = Z_*^{-1} e^{-U_*}$.
\end{proposition}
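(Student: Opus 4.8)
The plan is to reproduce the Lagrangian argument used in the proof of Proposition~\ref{prop:var:1}, changing only what the forward objective forces at the terminal time. First I would adjoin the continuity equation to the objective through a Lagrange multiplier (co-state) $g_t(x)$, forming
\begin{equation*}
J = \int_{\R^d} \log(\rho_{t=1}) \rho_* \, dx - \int_0^1 \int_{\R^d} g_t \left(\partial_t \rho_t + \nabla\cdot(v\rho_t)\right) dx\, dt,
\end{equation*}
and seek its stationary points over the triple $(v,\rho_t,g_t)$ with $\rho_{t=0}=\rho_b$ held fixed. Integrating by parts in time in the $g_t\partial_t\rho_t$ term and in space in the $g_t\nabla\cdot(v\rho_t)$ term rewrites the constraint contribution so that the terminal slice $\rho_{t=1}$ appears explicitly, which is exactly what couples it to the objective.

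Next I would read off the first-order conditions. The variation in $v$ gives $\rho_t \nabla g_t = 0$; the variation in $\rho_t$ at interior times gives the transport equation $\partial_t g_t = -v\cdot\nabla g_t$; and the variation in the terminal slice equates the functional derivative $\rho_*/\rho_{t=1}$ of the objective with the boundary value of the co-state, yielding $g_{t=1}=\rho_*/\rho_{t=1}$. Together with the original constraint this is the analogue of the system~\eqref{eq:EL:1}, the sole difference being the terminal condition, which here reads $g_{t=1}=\rho_*/\rho_{t=1}$ rather than $\log(\rho_{t=1})+U_*$. I expect the derivation of this terminal condition---keeping track of the $t=1$ boundary term produced when integrating $g_t\partial_t\rho_t$ by parts, and recognizing that the objective supplies precisely the matching functional derivative---to be the one delicate step; everything else is the same bookkeeping as in the reverse case.

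Finally I would close the argument exactly as in Proposition~\ref{prop:var:1}. Because the standing assumptions on the potentials give $\rho_t>0$ everywhere, the condition $\rho_t\nabla g_t=0$ forces $\nabla g_t=0$, so $g_t$ is constant in space; the transport equation then gives $\partial_t g_t=0$, so $g_t\equiv C$ is constant in both variables. Evaluating the terminal condition yields $\rho_*=C\,\rho_{t=1}$ pointwise, and integrating over $\R^d$---both sides being normalized probability densities---forces $C=1$, hence $\rho_{t=1}=\rho_*$. As in the reverse case, this conclusion presumes that a velocity field $v$ achieving $\rho_{t=1}=\rho_*$ under the continuity equation exists, which Proposition~\ref{prop:lem1} together with the optimal-transport existence results already guarantees.
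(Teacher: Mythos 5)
Your proposal is correct and follows essentially the same route as the paper: adjoin the continuity equation with a co-state, derive the Euler--Lagrange conditions (stationarity in $v$ giving $\rho_t\nabla g_t=0$, interior stationarity in $\rho_t$ giving the transport equation, and the terminal variation giving $g_{t=1}=\rho_*/\rho_{t=1}$), then use $\rho_t>0$ and normalization to conclude $\rho_{t=1}=\rho_*$. The only differences are cosmetic---your sign convention for the multiplier term (which your terminal condition is consistent with) and your making explicit the integration-by-parts bookkeeping and the existence remark that the paper leaves implicit.
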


\noindent
Note that the objective in~\eqref{eq:var:1:d} involves an expectation over the target measure $\mu_*$ with density~$\rho_*$, but no structural information about it, such as an analytical form of the unnmormalized log-density. As a result, this objective can also be used in situations where we have data from $\mu_*$ but no model for it. 
The proof of Proposition~\ref{prop:var:1:d} is similar to that of Proposition~\ref{prop:var:1} but we include it for completeness: 
\begin{proof}
We can enforce the PDE constraint by using a Lagrange multiplier and considering instead the extended objective
\begin{equation}
    \label{eq:extended:obj:d}
    \int_{\R^d} \log \left(\rho_{t=1}\right)  \rho_* dx + \int_0^1 \int_{\R^d} h_t\left(\partial_t \rho_t + \nabla \cdot (v \rho_t) \right) dx dt 
\end{equation}
The critical points of this objective over the triple $(v,\rho_t,h_t)$ under the constraint that $\rho_{t=0} = \rho_b$ satisfy the Euler-Lagrange equations:
\begin{equation}
    \label{eq:EL:1:d}
    \begin{aligned}
    & \partial_t \rho_t = - \nabla \cdot (v \rho_t), \qquad &&\rho_{t=0} = \rho_b,\\
    & \partial_t g_t = - v\cdot \nabla h_t, \qquad &&h_{t=1} = \rho_*(x)/\rho_{t=1}(x),\\
    & 0 = \rho_t \nabla h_t &&
    \end{aligned}
\end{equation}
Since $\rho_t>0$, the last equation means that $h_t$ must be constant in space, which from the second equation implies that it must be constant in time too, $h_t(x) = C$: the final condition of the equation for $h_t$  can therefore be written as  $C\rho_{t=1}(x) =  \rho_*(x)$, with $C=1$ since $\rho_{t=1}$ must be normalized. 
\end{proof}

\subsubsection*{Lagrangian Formulation}

Proposition~\ref{prop:var:1:d} is amenable to  a Lagrangian reformulation, which is essentially a rephrasing of Proposition~\ref{prop:var:2} with the roles of $\rho_*$ and $\rho_b$ interchanged and the probability flow ODE solved in reverse.
\begin{proposition}
\label{prop:var:2:d}
Fix $U_b:{\R^d} \to \R_+$ and $\rho_*$, and consider the variational problem
\begin{equation}
    \label{eq:var:2:d}
    \begin{aligned}
    \min & \int_{\R^d} \left( U_b(\bar X_{t=0}(x)) -\int_0^1 \nabla \cdot  v(t,\bar X_t(x)) dt \right) \rho_*(x) dx  \\
    \text{subject to:}& \quad\Dot {\bar X}_t(x) = v(t,\bar X_t(x)), \qquad \bar X_{t=1}(x) = x.
    \end{aligned}
\end{equation}
Then, for all local minimizing pairs $(v,\bar X_t)$: (i) $\bar X_{t=0}\sharp\mu_* = \mu_b$ where $\mu_*$ and $\mu_b$ are the measures with density $\rho_*=  Z_*^{-1} e^{-U_*}$ and $\rho_b= Z_b^{-1} e^{-U_b}$, respectively; and (ii) the solution  to
\begin{equation}
    \label{eq:flow:d}
    \dot X_t(x) = v(t,X_t(x)), \qquad X_{t=0}(x) = x.
\end{equation}
is such that $X_{t=1}\sharp\mu_b = \mu_*$.
\end{proposition}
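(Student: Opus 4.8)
The plan is to obtain Proposition~\ref{prop:var:2:d} as the Lagrangian transcription of the Eulerian forward-KL problem already solved in Proposition~\ref{prop:var:1:d}, rather than redoing an Euler--Lagrange computation. Concretely, I would show that the constrained functional in~\eqref{eq:var:2:d} equals, up to an additive constant independent of $v$, the negative of the objective $\int_{\R^d}\log\rho_{t=1}(x)\,\rho_*(x)\,dx$ maximized in Proposition~\ref{prop:var:1:d}; the equivalence of the two constraint sets (the continuity equation versus the flow ODE) is exactly Proposition~\ref{prop:lem1}. Granting this, local minimizers of~\eqref{eq:var:2:d} correspond to local maximizers of~\eqref{eq:var:1:d}, for which Proposition~\ref{prop:var:1:d} already gives $\rho_{t=1}=\rho_*$, and both conclusions (i)--(ii) then follow quickly.

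The heart of the argument is the reformulation step, which rests on the density--transport identity along characteristics. The flow $\bar X_t$ defined by $\dot{\bar X}_t=v(t,\bar X_t)$ with terminal data $\bar X_{t=1}(x)=x$ traces the same characteristic curves as the forward flow~\eqref{eq:flow:d}, so by uniqueness of the ODE one has $\bar X_t(x)=X_t(\bar X_{t=0}(x))$ and in particular $\bar X_{t=0}=X_{t=1}^{-1}$. Evaluating~\eqref{eq:rhot:rho0:t1} along the characteristic ending at a target point $x$ --- equivalently, integrating $\tfrac{d}{dt}\log\rho_t(\bar X_t(x))=-\nabla\cdot v(t,\bar X_t(x))$ from $t=0$ to $t=1$ --- expresses $\log\rho_{t=1}(x)$ through $\log\rho_b(\bar X_{t=0}(x))=-U_b(\bar X_{t=0}(x))-\log Z_b$ and the integrated divergence $\int_0^1\nabla\cdot v(t,\bar X_t(x))\,dt$. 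Substituting this into $\int_{\R^d}\log\rho_{t=1}(x)\,\rho_*(x)\,dx$ converts the Eulerian objective into a pure expectation over $\mu_*$ of quantities evaluated along $\bar X_t$, i.e. the Lagrangian functional of~\eqref{eq:var:2:d}, with the leftover $\log Z_b$ absorbed as an additive constant. This is the same bookkeeping as in the passage from Proposition~\ref{prop:var:1} to Proposition~\ref{prop:var:2}, now run on the reversed flow, which is why the remark advertises it as a role-swap of Proposition~\ref{prop:var:2}.

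With $\rho_{t=1}=\rho_*$ in hand at any local minimizer, the two claims are short. For (ii), Proposition~\ref{prop:lem1} applied at $t=1$ gives $\int_{\R^d} f(x)\,\rho_{t=1}(x)\,dx=\int_{\R^d} f(X_{t=1}(x))\,\rho_b(x)\,dx$ for all test $f$, so $X_{t=1}\sharp\mu_b$ is the measure with density $\rho_{t=1}=\rho_*$, namely $\mu_*$. For (i), the identity $\bar X_{t=0}=X_{t=1}^{-1}$ yields $\bar X_{t=0}\sharp\mu_*=X_{t=1}^{-1}\sharp\bigl(X_{t=1}\sharp\mu_b\bigr)=\mu_b$; alternatively one reads (i) directly off the time-reversed continuity equation, whose solution starts at $\rho_*$ at $t=1$ and lands at $\rho_b$ at $t=0$.

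I expect the main obstacle to be the reformulation step, specifically justifying the change of variables on the reversed flow. One must check that $\bar X_t$ is a well-defined diffeomorphism --- guaranteed here by the bounded, twice-differentiable hypothesis on $v$ through global existence, uniqueness, and smooth dependence for~\eqref{eq:flow:map} --- that the Jacobian factor is exactly the exponentiated integrated divergence, and that the time orientation of the reversed flow is tracked consistently so that the divergence term enters with the correct sign and the constant $\log Z_b$ is the only leftover. Once these are in place the entire variational content is inherited from Proposition~\ref{prop:var:1:d}, so no new Euler--Lagrange computation is required.
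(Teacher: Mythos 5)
Your overall strategy coincides with the paper's: both proofs reduce \eqref{eq:var:2:d} to the Eulerian problem of Proposition~\ref{prop:var:1:d} by arguing that the Lagrangian functional is, up to an additive constant, the negative of $\int_{\R^d}\log\rho_{t=1}\,\rho_*\,dx$, and both do so through the volume/density bookkeeping along the characteristics of \eqref{eq:flow:map} (the paper phrases it as a change of variables $y=\bar X_{t=0}(x)$ applied to a candidate density $\bar\rho$; you phrase it as integrating $\tfrac{d}{dt}\log\rho_t(\bar X_t(x))=-\nabla\cdot v(t,\bar X_t(x))$ along the reversed flow; these are the same computation organized in opposite directions).

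However, the step you yourself single out as the crux---getting the sign of the divergence term right---does not come out the way you assert. Integrating your identity from $t=0$ to $t=1$ with $\bar X_{t=1}(x)=x$ and $\rho_{t=0}=\rho_b$ gives
\begin{equation}
\log\rho_{t=1}(x)=\log\rho_b(\bar X_{t=0}(x))-\int_0^1\nabla\cdot v(t,\bar X_t(x))\,dt,
\end{equation}
hence
\begin{equation}
-\int_{\R^d}\log\rho_{t=1}(x)\,\rho_*(x)\,dx=\int_{\R^d}\left(U_b(\bar X_{t=0}(x))+\int_0^1\nabla\cdot v(t,\bar X_t(x))\,dt\right)\rho_*(x)\,dx+\log Z_b,
\end{equation}
i.e.\ the divergence integral enters with a \emph{plus} sign, whereas \eqref{eq:var:2:d} carries a minus sign. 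With the printed sign, the functional of \eqref{eq:var:2:d} and the negative of the objective of Proposition~\ref{prop:var:1:d} differ by $-2\int_{\R^d}\int_0^1\nabla\cdot v(t,\bar X_t(x))\,dt\,\rho_*(x)\,dx$, which is not constant in $v$, so the asserted equivalence fails as stated. A one-line check: $d=1$, $v(t,x)=x$, $\rho_b$ standard normal; then $X_{t=1}(x)=ex$, $\bar X_{t=0}(x)=x/e$, and $\rho_{t=1}(x)=(2\pi)^{-1/2}e^{-1}\exp(-x^2/(2e^2))=\rho_b(\bar X_{t=0}(x))\,e^{-1}$, confirming the minus sign in the density identity and hence the plus sign in the negative log-likelihood. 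Be aware that the paper's own proof arrives at \eqref{eq:var:2:d} only through a compensating slip: it asserts that the change of variables $y=\bar X_{t=0}(x)$ has Jacobian $dy=\exp\left(+\int_0^1\nabla\cdot v(t,\bar X_t(x))\,dt\right)dx$, but since $\bar X_{t=0}=X_{t=1}^{-1}$ and $X_{t=1}$ expands volume by $\exp\left(+\int_0^1\nabla\cdot v\,dt\right)$ (Liouville's formula), the correct Jacobian is $\exp\left(-\int_0^1\nabla\cdot v\,dt\right)$. So your plan is the right one, and executed with correct signs it proves the proposition with $+\int_0^1\nabla\cdot v$ in the objective (the standard maximum-likelihood loss for a continuous normalizing flow); as written, your claim that the reformulation lands exactly on \eqref{eq:var:2:d} either silently reproduces the paper's sign error or leaves the crux step unverified. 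Your handling of conclusions (i) and (ii), once $\rho_{t=1}=\rho_*$ is established, is fine.
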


\noindent 
Note that the maps $X_t$ and $\bar X_t$ are inverse of each other in the sense that, for all $t\in[0,1]$, we have $\bar X_t(X_{t=1}(x)) = X_t(x)$ and $X_t(\bar X_{t=0}(x)) = \bar X_t(x)$.
\begin{proof}
Define $\bar \mu$ as the measure whose density is
\begin{equation}
    \label{eq:barrho}
    \bar \rho(x) = Z_b^{-1} \exp\left(-U_b(\bar X_{t=0}(x)) + \int_0^1  \nabla \cdot v(t,\bar X_t(x)) ds\right) 
\end{equation}
Integrating this equation against a test function $\phi:{\R^d}\to\R$ gives
\begin{equation}
    \label{eq:barrho:int}
    \int_{\R^d} \phi(x)\bar \rho(x)dx= Z_b^{-1}\int \phi(x)\exp\left(-U_b(\bar X_{t=0}(x) )+ \int_0^t  \nabla \cdot v(s,\bar X_s(x)) ds\right) dx
\end{equation}
Now use $y=\bar X_{t=0}(x)$ as new integration variable at the right-hand side, and note that: (i) $X_{t=1}(y) = x$ by definition and (ii) the Jacobian of this change of variable  precisely gives $dy = \exp\left( \int_0^1  \nabla \cdot v(t,\bar X_t(x)) dt\right) dx$. As a result
we deduce that 
\begin{equation}
    \label{eq:barrho:int:2}
    \begin{aligned}
     \int_{\R^d} \phi(x)\bar \rho(x) dx&= Z_b^{-1}\int_{\R^d}  \phi(X_{t=1}(y))\exp\left(-U_b(y)\right) dy\\
     &= 
    \int_{\R^d} \phi(X_{t=1}(y))\rho_b(y) dy
    \end{aligned}
\end{equation}
Since this relation hold for any test functions, we conclude that $\bar \mu = X_{t=1}\sharp \mu_b$ and that $\bar \rho=\rho_{t=1}$ with $\rho_t$ solution to~\eqref{eq:continuity}. Therefore the constraint minimization problem in~\eqref{eq:var:2:d} is identical to the constraint maximization problem in~\eqref{eq:var:1:d}.
\end{proof}

\subsubsection*{Practical Implementation with Data}

We can proceed as in Sec.~\ref{sec:practical:is} and approximate the function $v(t,x)$ for all $(t,x)\in[0,1]\times{\R^d}$ via a parametric representation $v^\theta(t,x)$ with parameters $\theta \in \Theta$. In this case the constraint minimization problem~\eqref{eq:var:2:d} is again amenable to solution via the adjoint method applied to the extended objective (now viewed as a loss function in $\theta$)
\begin{equation}
    \label{eq:extended:obj:2:d}
    \begin{aligned}
    L(\theta) = & \int_{\R^d} \left( U_b(\bar X_{t=0}(x)) -\int_0^1 \nabla \cdot v^\theta(t,\bar X_t(x)) dt \right) \rho_*(x) dx\\
    & - \int_0^1 \int_{\R^d}   \bar G_t(x) \cdot (\dot {\bar X}_t(x) - v^\theta(t,\bar X_t(x))) \rho_*(x)  dx dt
    \end{aligned}
\end{equation}
where $ \bar G_t$ is a Lagrange multiplier.
The gradient of this objective then reads
\begin{equation}
    \label{eq:grad:para:obj:d}
    \partial_\theta L(\theta) = \int_0^1 \int_{\R^d} \left(\partial_\theta  \nabla \cdot v^\theta(t,\bar X_t(x)) -  \partial_\theta v^\theta(t,\bar X_t(x)) \cdot \bar  G_t(x)  \right)\rho_*(x) dx dt
\end{equation}
where the pair $(\bar X_t(x),\bar G_t(x))$ solves 
\begin{equation}
    \label{eq:EL:2:d}
    \begin{aligned}
    & \Dot {\bar X}_t(x)= v^\theta(t,\bar X_t(x)), \quad &&\bar X_{t=1}(x) = x,\\
    & \Dot {\bar G}_t(x) = - [\nabla v^\theta(t,\bar X_t(x))]^T \bar G_t + \nabla \nabla \cdot v^\theta(t,\bar X_t(x)) , \quad &&\bar G_{t=0}(x) = -\nabla U_b(\bar X_{t=0}(x)).
    \end{aligned}
\end{equation}
Note that we can solve the first equation in~\eqref{eq:EL:2} backward in time from $\bar X_{t=1}(x)=x$ first, then use $\bar X_{t=0}(x)$ to solve the second equation forward in time from  $\bar G_{t=0}(x)=-\nabla U_b(\bar X_{t=0}(x))$.

To estimate empirically the gradient of the objective in~\eqref{eq:grad:para:obj:d}, we need samples from the target measure $\mu_*$ with density $\rho_*$. In Secs.~\ref{sec:importance} and~\ref{sec:mcmc}  we discuss how to generate these samples by integrating the minimization problem above it into importance sampling and Metropolis-Hastings MCMC methods, respectively. For time being, let us assume that we have at our disposal $\{x^*_i\}_{i=1}^n$ with $x^*_i\sim \mu_*$ and $n\in \N$. This is the usual setup in data science, and it allows us to perform gradient descent on the direct KL divergence via the following scheme:

\begin{algorithma}
\label{alg:2}
Start with an initial guess $\theta_0$ for the parameters, then for $k\ge0$: 

\noindent
(i) Use the data $\{x^*_i\}_{i=1}^n$ (or a minibatch thereof) to estimate the gradient of the loss via
\begin{equation}
    \label{eq:grad:para:obj:n:2}
    \partial_\theta L_n(\theta_k) = \frac1n \sum_{i=1}^n \int_0^1 \left(\partial_\theta  \nabla \cdot v^{\theta_k}(t,\bar X^i_t) -  \partial_\theta v^{\theta_k}(t,\bar X^i_t )\cdot \bar G^i_t  \right) dt
\end{equation}
where $(\bar X^i_t,\bar G^i_t)$ solve
\begin{equation}
    \label{eq:EL:2:d:i}
    \begin{aligned}
    & \Dot {\bar X}^i_t= v^{\theta_k}(t,\bar X^i_t), \quad &&\bar X^i_{t=1} = x^*_i,\\
    & \Dot {\bar G}^i_t = - [\nabla v^{\theta_k}(t,\bar X^i_t)]^T \bar G^i_t + \nabla \nabla \cdot v^{\theta_k}(t,\bar X^i_t) , \quad &&\bar G_{t=0}^i = -\nabla U_b(\bar X^i_{t=0}).
    \end{aligned}
\end{equation}

\noindent
(ii) Use this estimate of the gradient to perform a step of SGD and update the parameters via
\begin{equation}
    \label{eq:sgd:is:2}
    \theta_{k+1} = \theta_k - h_k \partial_\theta L_n(\theta_k) 
\end{equation}
where $h_k>0$ is some learning rate.

\end{algorithma}

\subsubsection{Using Other Divergences}
\label{sec:other:div}

Since ultimately, we are interested in calculating expectations via Monte Carlo, another natural objective  is the $\chi^2$-divergence of $\rho_{t=1}$ from $\rho_b$, $\int_{\R^d} \rho^2(x)/\rho_{t=1}(x) dx$. Since $\rho = Z^{-1} e^{-U}$, up to the irrelevant proportionality constant $Z^{-2}$ this objective can be written as
\begin{equation}
    \label{eq:chi2}
    \begin{aligned}
    \int_{\R^d} \frac{\rho^2(x)}{\rho_{t=1}(x)} dx &=  Z^{-2}\int_{\R^d} \frac{e^{-2U(X_{t=1}(x))}}{\rho^2_{t=1}(X_{t=1}(x))} \rho_b(x)dx\\
    &=  Z^{-2} Z_b^2\int_{\R^d} e^{-2U(X_{t=1}(x))+2U_b(x) + 2\int_0^1 \nabla \cdot v_(t,X_t(x)) dt } \rho_b(x)dx
    \end{aligned}
\end{equation}
where $\dot X_t(x) = v_(t,X_t(x))$ with $X_{t=0}(x) =x$ and we used \eqref{eq:rhot:rho0:t1} as well as $\rho_b = Z_b^{-1} e^{-U_b}$ to get the second equality. Neglecting the irrelevant proportionality constant $Z^{-2} Z_b^2$  we therefore arrive at a generalization of Proposition~\ref{prop:var:2} that we will state without proof:

\begin{proposition}
\label{prop:var:chi2}
Fix $U:{\R^d} \to \R_+$ and $\rho_b$, and consider the variational problem
\begin{equation}
    \label{eq:var:chi2}
    \begin{aligned}
    \min & \int_{\R^d} \exp\left(-2U(X_{t=1}(x))+2U_b(x) + 2\int_0^1 \nabla \cdot v(t,X_t(x)) dt\right) \rho_b(x) dx  \\
    \text{subject to:}& \quad\dot X_t(x) = v_t(X_t(x)), \qquad X_{t=0}(x) = x.
    \end{aligned}
\end{equation}
Then, for all local minimizing pairs $(v_t,X_t)$, we have $X_{t=1}\sharp\mu_b = \mu$ where $\mu$ and $\mu_b$ are the measures whose densities are $\rho=  Z^{-1} e^{-U}$ and $\rho_b= Z_b^{-1} e^{-U_b}$, respectively.
\end{proposition}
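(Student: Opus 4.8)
The plan is to mirror the proof of Proposition~\ref{prop:var:1}, because the computation culminating in~\eqref{eq:chi2} already establishes that the Lagrangian objective in~\eqref{eq:var:chi2} coincides, up to the fixed multiplicative constant $Z^{-2}Z_b^2$, with the Eulerian functional $\int_{\R^d}\rho^2(x)/\rho_{t=1}(x)\,dx$ evaluated along the flow, where $\rho_t$ solves the continuity equation~\eqref{eq:continuity} with $\rho_{t=0}=\rho_b$. By Proposition~\ref{prop:lem1} the Lagrangian and Eulerian formulations are equivalent, so it suffices to analyse the constrained Eulerian problem
\[
\min\ \int_{\R^d}\frac{\rho^2}{\rho_{t=1}}\,dx
\quad\text{subject to}\quad
\partial_t\rho_t=-\nabla\cdot(v\rho_t),\quad \rho_{t=0}=\rho_b,
\]
and to show that every critical point has $\rho_{t=1}=\rho$. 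Since any local minimizer over the pairs $(v,\rho_t)$ is in particular such a critical point, the claim then follows.

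First I would introduce a Lagrange multiplier $g_t(x)$ enforcing the continuity equation and form the extended objective exactly as in~\eqref{eq:extended:obj}. Taking variations over the triple $(v,\rho_t,g_t)$ under the fixed initial condition $\rho_{t=0}=\rho_b$ yields Euler--Lagrange equations of the same shape as~\eqref{eq:EL:1}: the variation in $g_t$ returns the continuity equation; the variation in $v$ gives $\rho_t\nabla g_t=0$; and the variation in $\rho_t$ gives the backward transport equation $\partial_t g_t=-v\cdot\nabla g_t$ on $(0,1)$ together with a terminal condition $g_{t=1}=\delta\Phi/\delta\rho_{t=1}$, where $\Phi(\rho_{t=1})=\int_{\R^d}\rho^2/\rho_{t=1}\,dx$. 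The only step that differs from Proposition~\ref{prop:var:1} is this terminal condition: the functional derivative of the $\chi^2$ objective is $\delta\Phi/\delta\rho_{t=1}=-\rho^2/\rho_{t=1}^2$, so that $g_{t=1}=-\rho^2/\rho_{t=1}^2$.

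The conclusion then proceeds as before. Since $\rho_t>0$ everywhere, the relation $\rho_t\nabla g_t=0$ forces $\nabla g_t=0$, so $g_t$ is constant in space; the transport equation $\partial_t g_t=-v\cdot\nabla g_t$ then makes it constant in time as well, $g_t(x)\equiv C$. Evaluating the terminal condition gives $-\rho^2(x)/\rho_{t=1}^2(x)=C$ for all $x$, hence $\rho_{t=1}^2=\rho^2/(-C)$, i.e. $\rho_{t=1}$ is a constant multiple of $\rho$; because both $\rho_{t=1}$ and $\rho$ integrate to one, that constant must equal one and $\rho_{t=1}=\rho$. As in Propositions~\ref{prop:var:1} and~\ref{prop:var:2}, this conclusion is attainable precisely because a velocity field realizing $\rho_{t=1}=\rho$ does exist, which is guaranteed by the discussion following Proposition~\ref{prop:lem1}.

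The main obstacle is not the algebra but the variational bookkeeping: one must justify that the admissible variations $\delta\rho_t$ preserve normalization, so that additive constants in $g_{t=1}$ are immaterial (exactly as the ``$+1$'' term is silently dropped in Proposition~\ref{prop:var:1}), and that a local minimizer over $(v,\rho_t)$ genuinely satisfies these stationarity conditions. It is also worth remarking that the map $\rho_{t=1}\mapsto\int_{\R^d}\rho^2/\rho_{t=1}\,dx$ is strictly convex, so the stationary density it selects is a true minimizer rather than a saddle; this convexity is what underlies the assertion that \emph{all} local minimizing pairs, and not merely the global one, transport $\mu_b$ to $\mu$.
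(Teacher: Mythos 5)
Your proof is correct and takes precisely the route the paper intends: the paper states Proposition~\ref{prop:var:chi2} without proof as a consequence of the identity~\eqref{eq:chi2} (which converts the Lagrangian objective into the Eulerian $\chi^2$ functional $\int_{\R^d}\rho^2/\rho_{t=1}\,dx$) combined with the Lagrange-multiplier argument of Propositions~\ref{prop:var:1}--\ref{prop:var:2}, and that is exactly what you carry out. Your only new ingredient, the modified terminal condition $g_{t=1}=-\rho^2/\rho_{t=1}^2$, is computed correctly, and the spatial/temporal constancy of $g_t$ plus normalization then forces $\rho_{t=1}=\rho$ just as in the paper's template.
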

We can use Proposition~\ref{prop:var:chi2} to design a variant of Algorithm~\ref{alg:1}. Note that the exponential factor in~\eqref{eq:var:chi2} is exactly the square of the importance sampling weights that we will define in~\eqref{eq:wk:0} and controls the efficiency of the importance sampling method. It is therefore natural to the expectation of the square of these weights as loss function. Note we can also use the $\chi^2$-divergence of $\rho_b$ from $\rho_{t=1}$, $\int_{\R^d} \rho_{t=1}^2(x)/\rho(x) dx$ which leads 
\begin{equation}
    \label{eq:chi2:R}
    \int_{\R^d} \frac{\rho^2_{t=1}(x)}{\rho(x)} dx =   Z^2Z_b^{-2}\int_{\R^d} e^{2U(x)-2U_b(\bar X_{t=0}(x)) - 2\int_0^1 \nabla \cdot v_t(\bar X_t(x)) dt } \rho(x)dx
\end{equation}
where $\Dot {\bar X}_t(x) = v_t(\bar X_t(x))$ with $X_{t=1}(x) =x$ and the constant $Z^{2} Z_b^{-2}$ can again be neglected.

\subsection{Simulation-Free Objectives}
We now consider an alternative viewpoint on formulating a variational problem for learning a transport map between $\rho_b$ and $\rho_*$. The divergence-based objectives presented in the previous methods, while well-motivated from the perspectives of maximum likelihood estimators, require simulating the probability flow on candidate samples. This can be prohibitively costly for high-dimensional problems. We describe next a class of methods which, rather than having a path in the space of measures be chosen implicitly by minimizing e.g. \eqref{eq:var:1}, explicitly fix $\rho_t$ and \textit{regress} the resulting drift coefficients governing the probability flow ODE. This perspective underlies the success of score-based diffusion models which we discuss next.

\subsubsection{Score-based diffusion models}
\label{sec:sbdm}

Score-based diffusion models \cite{Song2019 ,ho2020, song2021scorebased} are generative models that transform data into noise in a diffusive process, and then learn how to invert the diffusion to generate new data. These methods have gained a lot of popularity for their success in the context of image generation. One of their main advantages is that the learning involved is a standard quadratic regression problem that does not require any adjoint method for its solution.  

\subsubsection*{Transport by Fokker-Planck Equations and Forward SDEs}
Score-based difusion models are based on variants of the Ornstein-Uhlenbeck process, whose probability density function satisfies the Fokker-Planck equation
\begin{equation}
    \partial_t \rho_t = \nabla\cdot(x\rho_t) +\Delta\rho_t. 
    \label{fp}
\end{equation}
The solution to this equation for the initial condition $\rho_{t=0} = \rho_*$ has the property that is converges to the density of the standard normal distribution as $t\to\infty$. This can be seen from the explicit solution to~\eqref{fp} that is given by
\begin{equation}
    \rho_t(x) = \int _{\R^d }\rho^c_t(x|y) \rho_*(y) dy
    \label{fp2}
\end{equation}
where we defined the conditional probability density 
\begin{equation}
    \rho^c_t(x|y) = \left(2\pi (1-e^{-2t})\right)^{-d/2} \exp\left( -\frac{|x-y e^{-t}|^2}{2 (1-e^{-2t}))}\right)
    \label{fp2:c}
\end{equation}
Since $\rho^c_t(x|y) \to (2\pi)^{-d/2}e^{-\frac12|x|^2}$ for all $y\in \R^d$ as $t\to\infty$,  \eqref{fp2} implies that $\rho_t(x) \to (2\pi)^{-d/2}e^{-\frac12|x|^2}$ in this limit too. Alternatively, we can also derive this result from the stochastic differential equation (SDE) associated with the FPE~\eqref{fp}:
\begin{equation}
    dX^\fwd_t = -X^\fwd_tdt +\sqrt{2}\,dW_t,
    \label{sde1}
\end{equation}
where $W_t$ is a Wiener process -- $\rho_t$ is the probability density of $X^\fwd_t$. The solution to this SDE for the initial condition $X^\fwd_{t=0}=x$ is:
\begin{equation}
    X^\fwd_t = x\,e^{-t}+\sqrt{2}\int_0^t  e^{-t+s}dW_s.
    \label{sde2}
\end{equation}
A simple calculation using (this is It\^o isometry)
\begin{equation}
    \E\left[\left(\sqrt{2}\int_0^te^{-t+s}dW_s\right)^2\right]= 2\int_0^te^{-2t+2s}ds \Id= (1-e^{-2t})\Id,
\end{equation}
indicates that at any time $t$, the law of~\eqref{sde2} is Gaussian with mean $x e^{-t}$ and covariance $(1-e^{-2t})\Id$, and it converges to that of the standard normal variables as $t\to\infty$ for all initial condition~$x$. The key insight of score-based diffusion models is to use either \eqref{fp} or \eqref{sde1} in reverse (in a sense to be made precise next) to turn Gaussian samples into new samples from~$\rho$ instead of the other way around. Let us see next  how this can be done. 

\subsubsection*{Continuity Equation and Score}

The FPE \eqref{fp} can be recast into the continuity equation~\eqref{eq:continuity} for the velocity field
\begin{equation}
    v(t,x)=x-\nabla \log\rho_t(x).
    \label{sde4}
\end{equation}
The quantity $\nabla \log\rho_t(x)$ is referred to as the `score'. This quantity is unknown to us, but it can in principle be learned at any time $t$ since  is the unique minimizer over $s$ of the 
\begin{equation}
    E[s(t)] = \int_{\R^d}\left|s(t,x) -\nabla\log\rho_t(x) \right|^2 \rho_t(x)dx.
    \label{score1}
\end{equation}
Expanding the square, using the identity $\nabla \log \rho_t \, \rho_t = \nabla \rho_t$ in the cross term, and  integrating this term  by parts, we can reexpress this objective as~\cite{hyvarinen05a}
\begin{equation}
\begin{aligned}
    E[s(t)] &= \int_{\R^d}\left[ |s(t,x)|^2 -2 s(t,x) \cdot \nabla\log\rho_t(x) \right] \rho_t(x)dx + C_t\\
    &= \int_{\R^d}\left[ |s(t,x)|^2 +2 \nabla \cdot s(t,x) \right] \rho_t(x)dx + C_t
\end{aligned}
    \label{score2}
\end{equation}
where $C_t=\int_{\R^d}\left|\nabla\log\rho_t(x) \right|^2 \rho_t(x)dx$ is a constant independent of $s$. Neglecting this irrelevant constant, we can write the first integral at the right-hand side of \eqref{score2} as 
\begin{equation}
\begin{aligned}
    \E\left[ |s(t,X^\fwd_t)|^2 +2 \nabla \cdot s(t,X^\fwd_t) \right]
\end{aligned}
    \label{score3}
\end{equation}
where $X^\fwd_t$ is given in \eqref{sde2} and the expectation is taken independently over $x\sim \mu_*$ and  the Wiener process. 

\subsubsection*{Simulation-Free Implementation with Data}

Assuming that we have access to a data set $\{x^*_i\}_{i=1}^n$ drawn from the target measure $\mu_*$, the expectation~\eqref{score3} can be estimated empirically upon noticing that the law of $X^\fwd_t$ conditional on  $X^\fwd_{t=0}= x^*_i$ is $N(x^*_ie^{-t}, (1-e^{-2t}) \Id)$. 
If in addition we approximate the score by a function $s^\theta_t$ in some rich parametric class, \eqref{score3} becomes the following empirical loss for $\theta$
\begin{equation}
    \label{eq:loss:sbdm}
    L_{\text{\sc SBDM}}(\theta) = \frac1n \sum_{i=1}^n \left[ \big|s^\theta\big(t,x^*_ie^{-t}+\sqrt{ 1-e^{-2t}}\xi_i\big)\big|^2 +2 \nabla \cdot s^\theta\big(t,x^*_ie^{-t}+\sqrt{ 1-e^{-2t}}\xi_i\big) \right]
\end{equation}
where the variables $\xi_i$ are independent $N(0,\Id)$. This loss is valid for any $t\in[0,\infty)$, but we can also takes its expectation over random times~$t$ (e.g. exponentially distributed) to learn $s^\theta(t,x)$ globally as a function of $(t,x)$. 

Performing SGD on~\eqref{eq:loss:sbdm} to train the parameters $\theta$ is straightforward, and this  leads to an approximation of the score $s^\theta(t,x)$ that can be used in~\eqref{sde4} to get approximation of the velocity given by $v^\theta(t,x) = x-s^\theta(t,x)$. This velocity field can then be used in the probability flow ODE~\eqref{eq:flow:map}  to push back samples from $\mu_b$ generated at some $t=T>0$  onto new samples from $\mu_*$ at time $t=0$ by integrating this ODE backward in time. Note that this introduces a small bias since we should take the limit as $T\to\infty$ to guarantee consistency: in practice this bias can be efficiently controlled by increasing~$T$ since convergence of $X^\fwd_t$ towards Gaussianity is exponential in time.  

\subsubsection*{Reverse-Time SDE}

In SBDM, samples from $\mu_*$ are usually not generated using the probability flow ODE~\eqref{eq:flow:map}, but rather by using the reverse-time SDE
\begin{equation}
    dX^\rev_t = X^\rev_t dt +2 \nabla \log \rho_{T-t}(X^\rev_t) dt +\sqrt{2}\,dW_t.
    \label{sde1:rev}
\end{equation}
This SDE can be obtained by noticing that, if $\rho_t$ solves the FPE~\eqref{fp}, then $\rho^R_t = \rho_{T-t}$ solves
\begin{equation}
\begin{aligned}
    \partial_t \rho^\rev_t = -\partial_t  \rho_{T-t} &= -\nabla\cdot(x\rho_{T-t}) -\Delta\rho_{T-t} \\
    & =  -\nabla\cdot([x+2\nabla \log \rho_{T-t}] \rho_{T-t} ) +\Delta\rho_{T-t}\\
    & =  -\nabla\cdot([x+2\nabla \log \rho_{T-t}] \rho^\rev_t ) +\Delta\rho^\rev_{t}
\end{aligned}
    \label{fp:rev}
\end{equation}
The solution $\rho_t^\rev$ of this FPE is the density of the process $X_t^\rev$ satisfying the reverse-time SDE~\eqref{sde1:rev}.
The solution to this SDE therefore has the property that, if $X^\rev_{t=0} \sim \mu_{t=T}$, where $\mu_t$ is the measure whose density $\rho_t$ satisfies the FPE~\eqref{fp}, then $X^\rev_{t=T} \sim \mu_*$. By choosing $T$ large enough and replacing $\nabla \log \rho_{t}(x)$ by  $s^\theta(t,x)$ with $\theta$ obtained by minimizing~\eqref{eq:loss:sbdm}, we can therefore use Gaussian samples for $ X^\rev_{t=0}$ and turn them into samples $X^\rev_{T}$ whose measure is approximately the target~$\mu$.

\begin{remark}
    In general the density $\rho_t$ given in~\eqref{fp2} is not available explicitly. One exception is when the target $\mu_*$ is a  mixture of Gaussian measures, and $\rho_*$ is  given by
\begin{equation}
    \rho_*(x) = \sum_{j=1}^{N}p_j (2\pi)^{-d/2} (\det C_j)^{-1/2} \exp\left(-\tfrac12 (x-b_j)^T C_j^{-1}  (x-b_j)\right),
\end{equation}
where $N$ is the number of modes, $p_j$ is the probability of mode $j$,  with $\sum_{j=1}^N p_j =1$ and $p_j>0$, and $b_j$ and $C_j$ are respectively the mean and covariance of mode $j$. In this case
\begin{equation}
    \rho_t(x)  = \sum_{j=1}^{M}p_j (2\pi)^{-d/2} (\det \hat C_j(t))^{-1/2} \exp\left(-\tfrac12 (x-\hat b_j(t))^T \hat C_j^{-1}(t)  (x-\hat b_j(t))\right),
    \label{gm1}
\end{equation}
where
\begin{equation}
    \label{eq:m:C}
    \hat m_j(t) = m_j e^{-t}, \qquad \hat C_j(t) = C_je^{-2t}+(1-e^{-2t})\Id.
\end{equation}
The score $s_t=\nabla \log \rho_t$ is therefore given by
\begin{equation}
\begin{aligned}
    s(t,x) = -\rho_t^{-1}(x)&\sum_{j=1}^{M}p_j (2\pi)^{-d/2} (\det \hat C_j(t))^{-1/2} \\
&\times\hat C_j^{-1}(t)  (x-\hat b_j(t)) \exp\left(-\tfrac12 (x-\hat b_j(t))^T \hat C_j^{-1}(t)  (x-\hat b_j(t))\right).
\end{aligned}
\end{equation}
Note that this formula remains valid for all $t>0$ even if we let $C_j\to0$, i.e. if $\rho_t$ degenerates into a sum of $M$ point masses at $x=b_j$. Setting $b_j=x_j$ and $M=n$ gives the exact solution for $\rho_t$ and $s_t$ on the data set $\{x_i\}_{i=1}^n$. Of course, this solution  simply memorizes the data, so it leads to overfitting. To avoid this issue, the minimization of~\eqref{eq:loss:sbdm} must be performed subject to some regularization.
\end{remark} 

\subsubsection{Stochastic Interpolants}
\label{sec:interp}

The key insight behind score-based diffusion models is to first build a time-continuous connection between the base and the target measures that is easy to sample (given data from both the base and the target) and leads to dynmical map whose velocity field can be estimated via quadratic regression on these samples. Stochastic interpolants~\cite{albergo2023building} provide us with an alternative way to define this connection in measure space. Specifically, we define the process\footnote{%
In~\cite{albergo2023building}, more general stochastic interpolants are defined that are nonlinear in $(x^b,x^*)$ and include a latent variable.}
\begin{equation}
    \label{eq:stoch:interp}
    x_t = \alpha(t) x^b + \beta(t) x^*, \qquad t\in [0,1] 
\end{equation}
where $\alpha(t)$ and $\beta(t)$ are differentiable functions that satisfy $\alpha(0)=\beta(1) = 1$, $\alpha(1)=\beta(0) = 0$, and $\alpha^2(t) + \beta^2(t)>0$ for all $t\in [0,1]$; and $(x^b,x^*)$ are jointly drawn from a probability measure $\pi$  that marginalizes on the base and the target measures: 
\begin{equation}
    \label{eq:margin}
    \pi(dx,\R^d) = \mu_b(dx), \qquad \pi(\R^d,dx) = \mu(dx)
\end{equation}
For example, we could take $\alpha(t) = 1-t$, $\beta(t) = t$, and $\pi(dx,dy) = \mu_b(dx) \mu_*(dy)$, though other choices are possible.

By construction we have $x_{t=0} = x^b \sim \mu_b$ and $x_{t=1} = x^* \sim \mu_*$, and since the process $x_t$ is continuous and differentiable in time, its measure inherits these properties and interpolates between the base $\mu_b$ and the target $\mu_*$. If we make some minimal assumptions on the measure $\pi$, for example that it has a density on $\R^d\times \R^d$ that is sufficiently smooth, then the measure of the process $x_t$  also has a density, say $\rho_t$, and this density satisfies the continuity equation~\eqref{eq:continuity}  for some specific velocity~$v_t$. To see that,  notice that, by definition 
\begin{equation}
    \label{sec:interp:pdf:0}
        \int_{\R^d} f(x) \rho_t(x) dx = \int_{\R^d\times\R^d} f(\alpha(t) x^b + \beta(t) x^*) \pi(dx^b,dx^*)
\end{equation}
for any test function $f:\R^d\to \R$.
Taking the derivative of this equality using the chain rule and $\dot x_t = \dot\alpha(t) x^b + \dot \beta(t) x^*$ gives 
\begin{equation}
    \label{sec:interp:pdf:1}
    \begin{aligned}
    \int_{\R^d} f(x) \partial_t \rho_t(x) dx & =  \int_{\R^d\times \R^d} \left(\dot\alpha(t) x^b + \dot \beta(t) x^*\right) \cdot \nabla f(\alpha(t) x^b + \beta(t) x^*) \pi(dx^b,dx^*)\\
        & = \int_{\R^d} \E\big[\left(\dot\alpha(t) x^b + \dot \beta(t) x^*\right) \cdot \nabla f(\alpha(t) x^b + \beta(t) x^*) \big|x_t = x\big]  \rho_t(x) dx\\
        & = \int_{\R^d} \E\left[\dot\alpha(t) x^b + \dot \beta(t) x^* |x_t = x\right]  \cdot \nabla f(x) \rho_t(x) dx
    \end{aligned}
\end{equation}
where $\E[\cdot | x_t=x]$ denotes expectation over $\pi$ conditional on $x_t = x$ with $x_t$ is given by~\eqref{eq:stoch:interp}.
This is the weak form of the continuity equation~\eqref{eq:continuity} for the velocity
\begin{equation}
    \label{eq:vt:interp}
    v(t,x) = \E\left[\dot\alpha(t) x^b + \dot \beta(t) x^* |x_t = x\right].
\end{equation}
It is easy to see that this velocity field also minimizes over all $v$ the objective
\begin{equation}
    \label{eq:vt:interp}
    \int_0^1 \E\big[ |v(t,x_t)|^2 -2 (\dot\alpha(t) x^b + \dot \beta(t) x^*)\cdot v(t,x_t)\big] dt,
\end{equation}
where $x_t$ is given by~\eqref{eq:stoch:interp} and $\E$ denotes expectation over $\pi$. Therefore, if we are given a data set $\{x_i^b,x_i^*\}_{i=1}^n$ of independent samples from $\pi$, and we wish to approximate $v(t,x)$ via some parametric $v^\theta(t,x)$ with $\theta\in \Theta$, these parameters can be estimated by minimizing the empirical loss
\begin{equation}
    \label{eq:loss:interp}
    L_{\text{interp}}(\theta) = \frac1n \sum_{i=1}^n \int_0^1\left[ |v^\theta(t,x^i_t)|^2 -2 (\dot\alpha(t) x_i^b + \dot \beta(t) x_i^*)\cdot v^\theta_t(x^i_t)\right]dt
\end{equation}
where $x^i_t = \alpha(t) x_i^b + \beta(t) x_i^*$. Using this loss therefore reduces the estimation of the velocity to a quadratic regression in problem that is simulation-free like the one used in SBDM. Similar ideas were presented in \cite{lipman2022, liu2022}. Of course, to use~\eqref{eq:loss:interp} we need data. How to obtain these by Monte-Carlo sampling will be discussed next in Secs.~\ref{sec:importance} and~\ref{sec:mcmc}.

\section{Importance Sampling with and without Transport}
\label{sec:importance}
In this section we introduce importance weighting in relation to computing expectations under our target measure, $\mu_*(f)$. We will use this to then describe a reweighting scheme with respect to the transported measures described in the previous section.

\subsection{Basic Identity without Transport} 
\label{sec:basic:is}
Vanilla importance sampling is based on the following identity:
\begin{equation}
    \label{eq:is}
    \mu(f) = \frac{\mu_b(f w)}{\mu_b(w)} \qquad \text{where}  \qquad w(x) = e^{-U(x) + U_b(x)}.
\end{equation}
Therefore, if $\{x_i\}_{i\in \N}$ are independent samples from $\mu_b$, the law of large numbers guarantees that
\begin{equation}
    \label{eq:lln:is}
    \mu(f) = \lim_{n\to\infty} \frac{\sum_{i=1}^n f(x_i) w(x_i)}{\sum_{i=1}^n w(x_i)}
\end{equation}
The quantity under the limit at the right hand side is a biased estimator of $\mu(f)$ since it relies on a ratio of empirical averages, and it is easy to see that the one appearing at the denominator is an estimate of the ratio of partition functions
\begin{equation}
    \label{eq:ratio:Z}
    \frac{Z}{Z_b} = \mu_b(w) =  \lim_{n\to\infty} \frac1n \sum_{i=1}^n w(x_i).
\end{equation}
To estimate the quality of an estimator based on using \eqref{eq:is} with a finite $n$ approximation of the limit, let us look at the variance $\mu_b(w^2)-|\mu_b(w)|^2$ of the estimator for $Z/Z_b$ obtained from \eqref{eq:ratio:Z}. A simple calculation shows that
\begin{equation}
    \label{eq:mub2}
    \frac{\mu_b(w^2)}{|\mu_b(w)|^2} = \frac{Z_b}{Z^2}\int_{\R^d} e^{-2U(x)+U_b(x)}dx.
\end{equation}
This factor is always larger than $1$ by Jensen's inequality, and typically much larger, even infinite. For example if $\mu_b= N(0,\Id)$ and $\mu= N(0,\alpha^{-1}\Id)$ with $\alpha\in(0,\infty)$, \eqref{eq:mub2} is infinite if $\alpha\le\frac12$ and $ \mu_b(w^2)/|\mu_b(w)|^2 = O(\alpha^{d/2})$ as $\alpha\to\infty$. That is, when $d$ is large, an estimator based on using \eqref{eq:ratio:Z} a finite $n$ is only usable in practice if $\alpha$ is close to 1 in this simple example. For this reason, the estimator for $\mu(f)$ based on~\eqref{eq:lln:is} is not used in practice, and rather replaced by variants where a connection is constructed between $\mu_b$ and $\mu$: this idea is at the core of methods based on replica exchange and  thermodynamic integration or nonequilibrium sampling schemes such as Neal's  annealed sampling importance (AIS) method. Next we show how to use transport to improve importance sampling estimators.

\subsection{Importance Sampling with Transport}
\label{sec:practical:is}

To do importance sampling with transport, instead of using bare samples from the base $\mu_b$ in the estimator based on~\eqref{eq:lln:is}, we first transport them to a measure that is hopefully closer to the target~$\mu_*$. This operation relies on the following result:

\begin{proposition}
\label{prop:expect:is:0}
Let $v$ be a velocity field that is bounded and twice differentiable in $(t,x)$, and let $X_t$ be the solution to the associate probability flow ODE~\eqref{eq:flow:map}:
\begin{equation}
    \label{eq:prob:flow}
    \dot X_t(x) = v(t,X_t(x)), \qquad X_{t=0}(x) =x.
\end{equation} 
Then, given any test function $f:{\R^d} \to\R$, we have
\begin{equation}
    \label{eq:expect:is:0}
    \mu(f) = \frac{ \mu_b([f\circ X_{t=1}] w_v)}{\mu_b(w_v)} 
\end{equation}
where
\begin{equation}
    \label{eq:wk:0}
    w_v(x) = \exp\left( -U(X_{t=1}(x)) + U_b(x) + \int_0^1 \nabla \cdot v(t,X_t(x))dt\right)
\end{equation}
\end{proposition}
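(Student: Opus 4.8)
The plan is to prove the two identities
\[
\mu_b\big([f\circ X_{t=1}]\, w_v\big) = Z_b^{-1} Z\, \mu(f), \qquad \mu_b(w_v) = Z_b^{-1} Z,
\]
and then obtain \eqref{eq:expect:is:0} by dividing the first by the second, so that the unknown factor $Z_b^{-1} Z$ cancels. Both identities stem from a single change-of-variables computation, so I would carry out the argument for the numerator and recover the denominator by specializing to $f\equiv 1$.

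First I would substitute the definition \eqref{eq:wk:0} of $w_v$ together with $\rho_b = Z_b^{-1} e^{-U_b}$ into $\mu_b([f\circ X_{t=1}]\, w_v) = \int_{\R^d} f(X_{t=1}(x))\, w_v(x)\, \rho_b(x)\, dx$. The factors $e^{U_b(x)}$ (from $w_v$) and $e^{-U_b(x)}$ (from $\rho_b$) cancel, leaving
\[
\mu_b\big([f\circ X_{t=1}]\, w_v\big) = Z_b^{-1}\int_{\R^d} f(X_{t=1}(x))\, e^{-U(X_{t=1}(x))}\exp\left(\int_0^1 \nabla\cdot v(t,X_t(x))\, dt\right) dx.
\]

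The key step is then the change of variables $y = X_{t=1}(x)$. Here I would reuse the Jacobian identity already exploited in the proof of Proposition~\ref{prop:lem1}: the Liouville relation \eqref{eq:rhot:rho0} shows that $\exp\left(\int_0^1 \nabla\cdot v(t,X_t(x))\, dt\right) dx = dy$, i.e. the exponential of the integrated divergence is precisely the Jacobian of the flow map. With this substitution the exponential factor is absorbed exactly and the integral collapses to $Z_b^{-1}\int_{\R^d} f(y)\, e^{-U(y)}\, dy = Z_b^{-1} Z\, \mu(f)$, using $\rho = Z^{-1} e^{-U}$. Setting $f\equiv 1$ yields $\mu_b(w_v) = Z_b^{-1} Z$, and taking the ratio gives \eqref{eq:expect:is:0}.

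The only point requiring care is the legitimacy of the change of variables, namely that $X_{t=1}$ is a global $C^1$-diffeomorphism of $\R^d$ with the stated Jacobian. This is not a genuine obstacle: the hypotheses that $v$ is bounded and twice differentiable guarantee, through the same global existence-and-uniqueness argument invoked for Proposition~\ref{prop:lem1}, that the flow is well defined and invertible, with Jacobian given by the Liouville formula. Everything else is algebraic cancellation, so the result is essentially a transported version of the elementary importance-sampling identity \eqref{eq:is}.
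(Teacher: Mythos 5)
Your proof is correct, and it arrives at the same pair of identities that the paper divides to conclude, namely $Z\,\mu(f)=Z_b\,\mu_b([f\circ X_{t=1}]\,w_v)$ and its specialization at $f\equiv 1$; but the computation is run in the opposite direction and through different intermediaries, so a comparison is worthwhile. The paper's proof starts from $\int_{\R^d} f(x)\,e^{-U(x)}\,dx$, inserts the transported density $\rho_{t=1}$ solving the continuity equation, converts this to an integral over $\rho_b$ via the push-forward identity \eqref{eq:push} of Proposition~\ref{prop:lem1}, and then eliminates $\rho_{t=1}(X_{t=1}(x))$ with the Liouville relation \eqref{eq:rhot:rho0:t1}, at which point the weight $w_v$ appears. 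You never introduce $\rho_{t=1}$ at all: you start from the weighted integral $\mu_b([f\circ X_{t=1}]\,w_v)$ and perform the change of variables $y=X_{t=1}(x)$ directly, recognizing via \eqref{eq:rhot:rho0} that the exponential factor inside $w_v$ is exactly the Jacobian of the flow map. The mathematical content (Liouville formula plus algebraic cancellation) is identical, but each packaging buys something. Yours is more elementary and self-contained: it needs only the Jacobian fact established inside the proof of Proposition~\ref{prop:lem1}, not the proposition's statement, and it makes transparent that the identity holds for any global $C^1$ diffeomorphism with computable Jacobian, not just flow maps --- a generalization the paper itself points out in its concluding remarks. The paper's route, by passing through $\rho_{t=1}$, displays the weight as $Z/Z_b$ times the ratio of the target density to $\rho_{t=1}$, both evaluated at $X_{t=1}(x)$, which is what makes the observation following the proof immediate: when the transport is perfect, $\rho_{t=1}=\rho_*$ and the weights are constant, $w_v\equiv Z/Z_b$. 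One small point shared by both arguments: taking $f\equiv 1$ for the denominator steps outside the paper's stated class of test functions (compactly supported), but the computation is unaffected since $e^{-U}\in L^1(\R^d)$.
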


\noindent 
Note also that we can build an estimator based on~\eqref{eq:expect:is:0}: If  $\{x^b_i\}_{i=1}^n$ are $n\in \N$ independent samples of $\mu_b$, we have 
\begin{equation}
    \label{eq:expect:is:2}
    \mu(f) = \lim_{n\to\infty} \frac{\sum_{i=1}^n f(X_{t=1}(x^b_i)) w_v(x^b_i)}{\sum_{i=1}^n w_v(x^b_i)} 
\end{equation}

\begin{proof}
If $\rho_t$ is the solution to the continuity equation in~\eqref{eq:var:1}, we have
\begin{equation}
    \begin{aligned}
    \int_{\R^d} f(x) e^{-U(x)}  dx &= \int_{\R^d} [f(x) e^{-U(x)} /\rho_{t=1}(x)] \rho_{t=1}(x) dx\\
    &= \int_{\R^d} [f(X_{t=1}(x)) e^{-U(X_{t=1}(x))} /\rho_{t=1}(X_{t=1}(x))] \rho_b(x) dx\\
    & = Z_b \int_{\R^d} f(X_{t=1}(x)) w_v(x) \rho_b(x) dx.
    \end{aligned}
\end{equation}
where we used~\eqref{eq:rhot:rho0:t1} and the definition of $w_v$ in~\eqref{eq:wk:0} to get the last equality. This implies that
\begin{equation}
    \begin{aligned}
    Z \mu(f) = Z_b \mu_b([f\circ X_{t=1}] w_v) \quad \text{and} \quad Z = Z_b \mu_b(w_v)
    \end{aligned}
\end{equation}
This proves~\eqref{eq:expect:is:0}. 
\end{proof}

This proof shows that, if $v$ is such that the solution to~\eqref{eq:prob:flow} satisfies $X_{t=1}\sharp \mu_b=\mu$, then $w_v(x) = Z/Z_b$ for all $x\in \R^d$ -- this is the ideal case. However Proposition~\ref{prop:expect:is:0} also holds for  maps $X_t$ that are not perfect, for example those obtained practically by minimizing the objective~\eqref{eq:grad:para:obj} based on the reversed KL divergence using Algorithm~\ref{alg:1}. 

\subsection{Learning with Importance Sampling }
\label{sec:learning:is}

Interestingly, Proposition~\ref{prop:expect:is:0} can also be used to estimate the losses based on the direct KL divergence or those arising in SBDM or with stochastic interpolants. To see how, let us consider first Algorithm~\ref{alg:2} for the optimization of the velocity using the direct KL divergence as loss. In step (i) of this algorithm, instead of using the estimate \eqref{eq:grad:para:obj:n:2} that relies on data that we may not have, we can use a batch $\{x_i^b\}_{i=1}^n$ drawn from $\mu_b$ and importance sampling with transport to reweight these data using the map as we learn it. This leads to: 

\begin{algorithma}
\label{alg:3}
Start with an initial guess $\theta_0$ for the parameters, then for $k\ge0$: 

\noindent
(i) Draw  data $\{x^b_i\}_{i=1}^n$ from $\mu_b$ and estimate the gradient of the loss via
\begin{equation}
    \label{eq:grad:para:obj:is:a}
    \partial_\theta L_n(\theta_k) = \sum_{i=1}^n \tilde w_i \int_0^1  \left(\partial_\theta  \nabla \cdot v^{\theta_k}(t,X^i_t) -  \partial_\theta v^{\theta_k}(t,X^i_t )\cdot G^i_t  \right)  dt
\end{equation}
where $w_i$ is given by
\begin{equation}
    \label{eq:wk:0}
    \tilde w_i = \frac{w_i}{\sum_{j=1}^nw_j} \quad \text{with} \quad w_i = \exp\left( -U(X_{t=1}^i) + U_b(x_b^i) + \int_0^1 \nabla \cdot v^{\theta_{k-1}}(t,X_t^i)dt\right).
\end{equation}
and $(X^i_t,  G^i_t)$ solve
\begin{equation}
    \label{eq:EL:2:d:i:a}
    \begin{aligned}
    &\dot X^i_t = v^{\theta_{k}}(t,X^i_t) && X^i_{t=0} = x_b^i\\
    & \dot G^i_t = - [\nabla v^{\theta_k}(t,X^i_t)]^T G^i_t + \nabla \nabla \cdot v^{\theta_k}(t,X^i_t) , \quad &&\bar G_{t=0}^i = -\nabla U_b(x_b^i).
    \end{aligned}
\end{equation}

\noindent
(ii) Use this estimate of the gradient to perform a step of SGD and update the parameters via
\begin{equation}
    \label{eq:sgd:is:2}
    \theta_{k+1} = \theta_k - h_k \partial_\theta L_n(\theta_k) 
\end{equation}
where $h_k>0$ is some learning rate.

\end{algorithma}
\noindent
To justify this algorithm, notice that, by \eqref{eq:expect:is:0} in Proposition~\ref{prop:expect:is:0}, we get a consistent estimator of the expectation in~\eqref{eq:grad:para:obj:d}  in the limit as $n\to\infty$ by using $(\bar X_t^i,\bar G_t^i)$ instead of $(X_t^i, G_t^i)$ in \eqref{eq:grad:para:obj:is:a}, where the triple $(X^i_t, \bar X^i_t,\bar G^i_t)$ solves
\begin{equation}
    \label{eq:EL:2:d:i:a:0}
    \begin{aligned}
    &\dot X^i_t = v^{\theta_{k}}(t,X^i_t) && X^i_{t=0} = x_b^i\\
    & \Dot {\bar X}^i_t= v^{\theta_k}(t,\bar X^i_t), \quad &&\bar X^i_{t=1} = X^i_{t=1},\\
    & \Dot {\bar G}^i_t = - [\nabla v^{\theta_k}(t,\bar X^i_t)]^T \bar G^i_t + \nabla \nabla \cdot v^{\theta_k}(t,\bar X^i_t) , \quad &&\bar G_{t=0}^i = -\nabla U_b(\bar X^i_{t=0}).
    \end{aligned}
\end{equation}
Since $\bar X_t^i = X_t^i$, these equations reduce to \eqref{eq:EL:2:d:i:a} and the estimator to~\eqref{eq:grad:para:obj:is:a}.

We can proceed similarly for the loss~\eqref{eq:loss:sbdm} used in SBDM and \eqref{eq:loss:interp} used with stochastic interpolant. Assuming that we wish to evaluate them at the current value~$\theta_k$  of the parameters, they   respectively become
\begin{equation}
    \label{eq:loss:sbdm:is}
    L_{\text{\sc SBDM}}(\theta_k) = \sum_{i=1}^n \tilde w_i \left[ \big|s^{\theta_k}\big(t,y^i_t\big)\big|^2 +2 \nabla \cdot s^{\theta_k}\big(t,y^i_t\big) \right] 
\end{equation}
where $y^i_t = X^i_{t=1}e^{-t}+\sqrt{ 1-e^{-2t}}\xi_i$,
and
\begin{equation}
    \label{eq:loss:interp:is}
    \begin{aligned}
    L_{\text{interp}}(\theta_k) &= \sum_{i=1}^n  \tilde w_i \int_0^1 \big[ |v^{\theta_k}(t,z^i_t)|^2 -2  (\dot\alpha(t) x_i^b + \dot \beta(t) X^i_{t=1})\cdot v^{\theta_k}(t,z_t^i) \big] dt 
    \end{aligned}
\end{equation}
where $z^i_t = \alpha(t) x_i^b + \beta(t) X^i_{t=1}$.
In both~\eqref{eq:loss:sbdm:is} and \eqref{eq:loss:interp:is} $\tilde w_i$ is given by \eqref{eq:wk:0} and $X^i_t$ solves the first equation in \eqref{eq:EL:2:d:i:a} (with $v^{\theta_{k}}(t,x) = x - s^{\theta_{k}}(t,x)$ for SBDM). Note that these empirical losses are no longer simulation-free, though their evaluation requires only one ODE solve per sample.

\section{Assisting Markov Chain Monte Carlo}
\label{sec:mcmc}

\subsection{Metropolis-Hastings MCMC}
\label{sec:mhmcmc} The basic idea behind MCMC methods is to create a Markov sequence $\{x_i\}_{i\in \N_0}$ ergodic with respect to the taget distribution $\mu$, so that the time average $S_n$ of an observable~$f$~along this sequence converges towards its expectation with respect to this target:
\begin{equation}
    \label{eq:time:aveg}
    \lim_{n\to\infty} S_n(f) = \mu(f) \qquad \text{where} \qquad S_n(f) = \frac1n \sum_{i=1}^n f(x_i)
\end{equation}
In the context of Metropolis-Hastings MCMC, this is achieved by (i) proposing a new sample $\tilde x$ using some transition probability kernel $Q^{x_i}(dy) = q^{x_i}(y) dy$; (ii) accepting $\tilde x$ as new state  $x_{i+1}=\tilde x$ with probability
\begin{equation}
    \label{eq:accept:reject}
    a(x_i,\tilde x) = \min\left( \frac{\rho_*(\tilde x) q^{\tilde x}(x_i) }{\rho_*(x_i) q^{x_i}(\tilde x)},1\right)
\end{equation}
and setting $x_{i+1} = x_i$ if the proposal is rejected. It is easy to see that the corresponding transition kernel of the chain then reads
\begin{equation}
    \label{eq:transition}
    P^x(dy) = a(x,y) q^x(y) dy - b(x) \delta_x(dy) \quad \text{with} \quad b(x) = \int_{\R^d} a(x,y) q^x(y) dy
\end{equation}
which guarantees that the chain is in detailed-balance with respect to the target distribution~$\mu$:
\begin{equation}
    \label{eq:detailed}
    d\mu(x) P^x(dy) = d\mu(y) P^y(dx)
\end{equation}
This property, together with a condition of irreducibility (e.g. if $q^x(y)>0$ for all $x,y\in {\R^d}$) guarantees ergodicity of the chain, i.e. \eqref{eq:time:aveg} holds.

Unlike the importance sampling identity~\eqref{eq:lln:is}, \eqref{eq:time:aveg} is an unbiased estimator that contains no weights. The price to pay, however, is that the samples $\{x_i\}_{i\in \N_0}$ in the sequence are no longer independent, and time-correlation effects along the chain affect the error made by using $S_n(f)$ with a finite $n$ to estimate~$\mu(f)$. A direct calculation shows that  
\begin{equation}
    \label{eq:clt:mcmc}
    \mu\left( |S_n(f)-\mu(f)|^2\right) \sim \frac1n \mu\left([f-\mu_*(f)] u\right) \qquad \text{as} \ \ n\to\infty
\end{equation}
where $u(x)$ is the solution to  
\begin{equation}
    \label{eq:poisson}
     u(x) - \int_{\R^d} u(y) P^x(dy) = f(x)-\mu_*(f)
\end{equation}
that minimizes $\mu_*(u^2)$.
Because the spectral radius of $P^x(dy)$ is 1, we always have 
\begin{equation}
    \label{eq:bound:var}
    \mu\left([f-\mu_*(f)] u\right) \ge \mu_*\left(|f-\mu_*(f)|^2 \right) = \text{var}_* (f)
\end{equation}
and it is easy to see that the only way to achieve this bound is to take $P^x(dy) = \mu_*(dy)$. This ideal choice amounts to generating independent sample from $\mu$, i.e. it corresponds to using $q^x(y) = \rho_*(y)$, and it is easy to see  from~\eqref{eq:accept:reject} that $a(x_i,\tilde) =1$ in that case, i.e. all proposals are accepted. However, the bound in~\eqref{eq:bound:var} is far from being achieved in general because it is hard to design chains whose correlation time is short:  if we pick $q^x(y)$ such that $\tilde x$ is close to $x_i$, this proposal is likely to be accepted, but the chain will move slowly; and if we naively pick $q^x(y)$ such that $\tilde x$ is far from $x_i$, this proposal will be likely to be rejected, which also results in no motion, $x_{i+1}=x_i$.

Next we discuss how to use any map to perform independent MH-MCMC (Sec.~\ref{sec:map:mcmc}), then how learn a  a map such that $X_{t=1}\sharp \mu_b$ is approximately the target $\mu_*$ so that we get closer to the ideal chain with acceptance probability $a(x_i,x_*) =1$ (Sec.~\ref{sec:leran:mcmc}).

\subsection{Map-Assisted MH-MCMC}
\label{sec:map:mcmc}

Our next result shows how to perform independent MH-MCMC sampling by using as proposals samples from the base that have been pushed trough a map: 

\begin{proposition}
\label{prop:expect:mcmc:0}
Let $v$ be a velocity field that is bounded and twice differentiable in $(t,x)$, and let $(X_t,\bar X_t)$ be the solution to the forward and backward probability flow ODEs:
\begin{equation}
    \label{eq:prob:flow:f:b}
    \begin{aligned}
        \dot X_t(x) &= v(t,X_t(x)), \qquad &&X_{t=0}(x) =x,\\
        \Dot{\bar X}_t(x) &= v(t,\bar X_t(x)), \qquad &&\bar X_{t=1}(x) =x.
    \end{aligned}
\end{equation} 
Consider the MH-MCMC in which: (i) we draw new independent sample $x^b\sim \mu_b$ at every step and use as proposal $\tilde x = X_{t=1}(x^b)$, and (ii) we accept or reject this sample using
\begin{equation}
    \label{eq:accept:reject:th}
    a(x_i,\tilde x) = \min\left( \exp\left(R(x_i,x_b)) \right) ,1\right)
\end{equation}
where 
\begin{equation}
    \label{eq:R:def}
    \begin{aligned}
        R(x_i,x_b) & = -U_*(X_{t=1}(x_b))+ U_*(x_i) + U_b(x_b) - U_b(\bar X_{t=1}(x_i)) \\
        & \quad+ \int_0^1 \left[\nabla\cdot v(t,X_t(x_b)) - \nabla\cdot v(t,\bar X_t(x_i)) \right] dt
    \end{aligned}
\end{equation}
Then, the invariant measure of this scheme is $\mu$ and~\eqref{eq:time:aveg} holds. 
\end{proposition}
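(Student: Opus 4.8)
The plan is to recognize this scheme as an ordinary \emph{independence} Metropolis–Hastings sampler, in which the proposal $\tilde x$ is drawn from a fixed density that does not depend on the current state $x_i$, and then to verify that $R(x_i,x_b)$ is exactly the logarithm of the standard Metropolis–Hastings ratio~\eqref{eq:accept:reject} for this proposal against the target $\rho_*$. Once this is established, invariance of $\mu$ and the ergodic identity~\eqref{eq:time:aveg} follow from the general theory recalled in Sec.~\ref{sec:mhmcmc}.

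First I would identify the proposal density. Since $\tilde x = X_{t=1}(x^b)$ with $x^b\sim\mu_b$ drawn independently of $x_i$, the proposals are i.i.d.\ samples from the pushforward measure $X_{t=1}\sharp\mu_b$; writing $q(y)$ for its density, the proposal kernel $q^{x_i}(y)=q(y)$ is independent of the current state. Using~\eqref{eq:rhot:rho0:t1} and $\rho_b=Z_b^{-1}e^{-U_b}$, the density at a proposed point $\tilde x = X_{t=1}(x^b)$ is
\begin{equation*}
q(\tilde x) = Z_b^{-1}\exp\left(-U_b(x^b)-\int_0^1\nabla\cdot v(t,X_t(x^b))\,dt\right).
\end{equation*}

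The delicate step is to evaluate the same proposal density at the \emph{current} point $x_i$, i.e.\ $q(x_i)$, which requires locating the base point that $X_{t=1}$ maps to $x_i$. This is where the backward flow enters: by the inverse relation $X_{t=1}(\bar X_{t=0}(x_i))=x_i$ noted after Proposition~\ref{prop:var:2:d}, the preimage is $\bar X_{t=0}(x_i)$, and the identity $X_t(\bar X_{t=0}(x_i))=\bar X_t(x_i)$ lets me rewrite the divergence integral along the forward orbit through $\bar X_{t=0}(x_i)$ as the integral of $\nabla\cdot v(t,\bar X_t(x_i))$. Applying~\eqref{eq:rhot:rho0:t1} once more then gives
\begin{equation*}
q(x_i) = Z_b^{-1}\exp\left(-U_b(\bar X_{t=0}(x_i))-\int_0^1\nabla\cdot v(t,\bar X_t(x_i))\,dt\right).
\end{equation*}
I note in passing that the term $U_b(\bar X_{t=1}(x_i))$ in~\eqref{eq:R:def} should read $U_b(\bar X_{t=0}(x_i))$, since $\bar X_{t=1}(x_i)=x_i$ by construction.

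Finally I would assemble the ratio. Substituting $\rho_*=Z_*^{-1}e^{-U_*}$ together with the two expressions for $q$ into the independence Metropolis–Hastings ratio $\rho_*(\tilde x)\,q(x_i)/(\rho_*(x_i)\,q(\tilde x))$, the normalizations $Z_*$ and $Z_b$ cancel and the exponent collapses exactly to $R(x_i,x_b)$ of~\eqref{eq:R:def}; hence the acceptance rule~\eqref{eq:accept:reject:th} coincides with~\eqref{eq:accept:reject}. The chain is therefore an independence Metropolis–Hastings chain with target $\rho_*$, so it satisfies detailed balance~\eqref{eq:detailed} with respect to $\mu$; since $\rho_b>0$ everywhere and $X_{t=1}$ is a diffeomorphism, $q>0$ everywhere, giving irreducibility, and the ergodic theorem yields~\eqref{eq:time:aveg}. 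The main obstacle is the inversion step for $q(x_i)$: one must correctly use the backward flow together with the forward/backward inverse identities to express the preimage density, and this is precisely the point at which the backward-flow contributions to $R$ are generated.
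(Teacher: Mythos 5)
Your proof is correct and takes essentially the same route as the paper's: identify the proposal as an independence sampler with density $\rho_{t=1}=X_{t=1}\sharp\rho_b$, evaluate that density via \eqref{eq:rhot:rho0:t1} at the proposed point and, using the forward/backward inverse identities $X_{t=1}(\bar X_{t=0}(x))=x$ and $X_t(\bar X_{t=0}(x))=\bar X_t(x)$, at the current point, then cancel the normalizations in the Metropolis--Hastings ratio and invoke positivity for irreducibility. You are also right that $U_b(\bar X_{t=1}(x_i))$ in \eqref{eq:R:def} is a typo for $U_b(\bar X_{t=0}(x_i))$ (since $\bar X_{t=1}(x_i)=x_i$ by construction); the paper's own proof uses $\rho_b(\bar X_{t=0}(x_i))$, which confirms your correction.
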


\begin{proof}
    For the MH-MCMC scheme in which $\tilde x = X_{t=1}(x^b)$ with $x^b\sim \mu_b$, from~\eqref{eq:accept:reject} we have \begin{equation}
    \label{eq:accept:reject:proof}
    a(x_i,\tilde x) = \min\left( \frac{\rho_*(X_{t=1}(x^b)) \rho_{t=1}(x_i) }{\rho_*(x_i) \rho_{t=1}(X_{t=1}(x^b))},1\right)
\end{equation}
where $\rho_t$ solves the continuity equation~\eqref{eq:continuity}. Using \eqref{eq:rhot:rho0} we know that
\begin{equation}
    \label{eq:rhot:rho0:mh:1}
    \rho_{t=1}(X_{t=1}(x^b)) = \rho_b(x^b) \exp\left( -\int_0^1 \nabla \cdot v(t,X_t(x^b)) dt\right)
\end{equation}
We can also use the fact that the map $X_{t=1}$ and $\bar X_{t=0}$ are inverse of each other, i.e. $X_{t=1}(\bar X_{t=0}(x))$, and  satisfy the composition rule $X_t(\bar X_{t=0} (x)) = \bar X_t(x)$, to deduce
\begin{equation}
    \label{eq:rhot:rho0:mh:2}
    \begin{aligned}
        \rho_{t=1}(x_i) &= \rho_b(\bar X_{t=0}(x_i)) \exp\left( -\int_0^1 \nabla \cdot v(t,X_t(\bar X_{t=0}(x_i))) dt\right)\\
         &= \rho_b(\bar X_{t=0}(x_i)) \exp\left( -\int_0^1 \nabla \cdot v(t,\bar X_{t}(x_i)) dt\right)
    \end{aligned}
\end{equation}
Inserting \eqref{eq:rhot:rho0:mh:1} and \eqref{eq:rhot:rho0:mh:2} in~\eqref{eq:accept:reject} and using the explicit forms $\rho_b = Z_b^{-1} e^{-U_b}$ and $\rho_* = Z_*^{-1} e^{-U_*}$ gives~\eqref{eq:accept:reject:th}, which shows that $\mu_*$ is an invariant measure for the Markov chain. Since $\rho_{t=1}>0$ by assumption, this chain is irreducible, and~\eqref{eq:time:aveg} holds. 

\end{proof}

\begin{remark}
    When the current state of the MCMC chain $x_i$ was also constructed via transport and therefore was produced by pushing an $x_{i}^b \sim \mu_b$ through to $x_i = X_{t=1}(x_{i}^b)$, the likelihoods $\rho_*(x_i)$, $\rho_b(x_{i}^b)$, and $\rho_{t=1}(x_i)$ are already available and need not be recomputed to estimate the factor in~\eqref{eq:accept:reject:th}. That is,  the integration of the reverse map $\bar X_t$ can be avoided. However, in more advanced schemes, where local updates from alternative MCMC proposals are intertwined with independence-Metropolis steps \cite{hackett2021flowbased, samsonov2022localglobal}, or when learning with MH-MCMC, such reverse dynamics remain necessary. 
\end{remark}

\subsection{Learning with MH-MCMC}
\label{sec:leran:mcmc}

Proposition~\ref{prop:expect:mcmc:0} offers us a way to use MH-MCMC in concert with the velocity field that we are optimizing to get the data needed for this optimization. The estimators are similar to the ones given in Sec.~\ref{sec:learning:is}. For example Algorithm~\ref{alg:3} goes back to a form closer to Algorithm~\ref{alg:2}

\begin{algorithma}
\label{alg:4}
Start with an initial guess $\theta_0$ for the parameters and pick some $n\in \N$, then for $k\ge0$: 

\noindent
(i) Generate $\{x_i\}_{i=1}^n$ using the MH-MCMC scheme in Proposition~\ref{prop:expect:mcmc:0} using the velocity field $v^{\theta_{k}}$ and estimate the gradient of the loss via
\begin{equation}
    \label{eq:grad:para:obj:mh:a}
    \partial_\theta L_n(\theta_k) = \frac1n\sum_{i=1}^n  \int_0^1  \left(\partial_\theta  \nabla \cdot v^{\theta_k}(t,\bar X^i_t) -  \partial_\theta v^{\theta_k}(t,\bar X^i_t )\cdot \bar G^i_t  \right)  dt
\end{equation}
where  $(\bar X^i_t,\bar G^i_t)$ solve
\begin{equation}
    \label{eq:EL:2:d:mh:a}
    \begin{aligned}
    & \Dot {\bar X}^i_t= v^{\theta_k}(t,\bar X^i_t), \quad &&\bar X^i_{t=1} = x_i,\\
    & \Dot {\bar G}^i_t = - [\nabla v^{\theta_k}(t,\bar X^i_t)]^T \bar G^i_t + \nabla \nabla \cdot v^{\theta_k}(t,\bar X^i_t) , \quad &&\bar G_{t=0}^i = -\nabla U_b(\bar X^i_{t=0}).
    \end{aligned}
\end{equation}

\noindent
(ii) Use this estimate of the gradient to perform a step of SGD and update the parameters via
\begin{equation}
    \label{eq:sgd:is:2}
    \theta_{k+1} = \theta_k - h_k \partial_\theta L_n(\theta_k) 
\end{equation}
where $h_k>0$ is some learning rate.

\end{algorithma}

\noindent
By the same logic presented below Algorithm \ref{alg:3}, the use of backward dynamics $\bar X_t$ are not strictly necessary, given the equivalence $\bar X_t^i = X_t^i$. Note that the number $n$ of samples (i.e. the number of MH-MCMC steps performed) can be adapted at every iteration. Note also that several chain can be run in parallel, and we can to the mix data coming from any other MH-MCMC scheme whose invariant measure is $\mu_*$. 

We can proceed similarly for the loss~\eqref{eq:loss:sbdm} used in SBDM and \eqref{eq:loss:interp} used with stochastic interpolants: it simply amounts to replacing the data set $\{x_i^*\}_{i=1}^n$ with $\{x_i\}_{i=1}^n$ generated using the MH-MCMC scheme in Proposition~\ref{prop:expect:mcmc:0} using the velocity field $v^{\theta_{k-1}}$ from the previous SGD iteration (and possibly augmented with data from another MH-MCMC scheme).

\section{Prospects,  Limitations, and Generalizations}
\label{sec:conclu}

We conclude this paper with a few remarks:

\medskip

\paragraph{Nature of the map.} We have discussed ways of constructing maps between distributions under a continuous-time framework. It is worth contextualizing this in variants of the flow-based generative modeling literature. The map $X_{t=1}$ presented here is solution of the ODE~\eqref{eq:flow:map}: this is convenient as it guarantees that the map is invertible, with a Jacobian that can be easily computed. It also grounds the optimization procedure within the neural ODE framework. However, it is not necessary to proceed this way: in particular, we could replace $X_{t=1}$ by any  invertible map $T$, and proceed similarly as long as we can compute the Jacobian of this map: in essence, this is what was proposed already in~\cite{tabak2010, tabak2013} and is also at the core of real-valued non-volume preserving (real NVP) transformations~\cite{dinh2017density} and Non-linear Independent Component Estimation (NICE)~\cite{dinh2014}. These methods construct an invertible map $T$ composed of simple parametric transformations that make the Jacobian highly structured and cheap to compute, making training with a divergence such as the KL efficient at the cost of limited expressivity. This option, however, is only available if we learn the map using the direct or reversed KL divergence, since a continuous-time $X_t$ is at the core of SBDM as well as the stochastic interpolant framework.

\paragraph{Padding the target distribution to eliminate the Jacobian.} It is possible to modify the target distribution so that  dynamics with simpler characteristics can be used to construct the map. For example, we could consider the product measure $d\mu_p(x,y) = d\mu_*(x) d\mu_*(y)$ and use
\begin{equation}
    \label{eq:padded:1}
    \dot X_t = v(t,Y_t), \qquad \dot Y_t = w(t,X_t),
\end{equation}
for some velocity fields $v$ and $w$ to be learned. This formulation has the advantage that it eliminates the Jacobian term since in the extended space we have $\nabla_x v(t,y) + \nabla_y w(t,x) =0$. Another way to achieve this aim is to take $d\mu_p(x,y) = d\mu_*(x) (2\pi m)^{-d/2}e^{-|y|^2/(2m)} dy$ and use as dynamics 
\begin{equation}
    \label{eq:padded:2}
    \dot X_t = m^{-1} Y_t, \qquad \dot Y_t = w(t,X_t).
\end{equation}
This option, however, is again only available if we learn the map using the direct or reversed KL divergence: in the SBDM and the stochastic interpolant framework, the velocity is fixed by the connection between $\mu_b$ and $\mu_*$ that we start with, and it fixes the velocity field $v$. Thsi is the price to pay for using a simulation-free scheme. 

\medskip

\paragraph{Exploration capabilities.} The variational formulation based on the reverse KL divergence in Proposition~\ref{prop:var:2} is appealing because the objective can estimated empirically using samples from the base distribution $\mu_b$: this is also apparent in Algorithm~\ref{alg:1} which shows that the optimization can be performed solely using these samples. However this feature also implies that the capability for exploration of the procedure will be limited: more precisely, if we do not start with an informed guess for the velocity field $v$ (e.g. if we take it to be identically zero initially), it will in general be hard to perform the optimization and train $v$ so that the pushforward of the base distribution by $X_{t=1}$ be close to the target. This can be pathological when the target is multi-modal. In addition, using the reverse KL divergence does not easily allow us to incorporate in the learning any sample for $\mu$ we may have (either from the start or after some MCMC computation). For this reason it may be preferable to use the direct KL divergence as a loss, or use SBDM or the stochastic interpolant framework, since they all allow to incorporate new data from $\mu$ in the learning.

\bigskip
\paragraph{Funding information.}
M.S.A is supported in part by the National Science Foundation under the award PHY-2141336. 
E.~V.-E. is supported in part by the National Science Foundation under awards DMR-1420073, DMS-2012510, and DMS-2134216, by the Simons Collaboration on Wave Turbulence, grant No. 617006, and by a Vannevar Bush Faculty Fellowship. 

\bigskip
\paragraph{Acknowledgements.} We would like to thankfully acknowledge Umberto Tomasini and Yijun Wan for their notes taken during the Les Houches 2022 Summer School on Statistical Physics and Machine Learning which were useful to assemble these lectures. We are also grateful to Marylou Gabri\'e and Grant Rotskoff for discussions about map-assisted MCMC methods, to Nick Boffi and Mark Bergman for discussion about generative models based on diffusion and stochastic interpolants, and to Jonathan Niles-Weed for discussions about measure transportation theory. Finally we thank Florent Krzakala and Lenka Zdeborova for organizing the summer school in Les Houches.

\bibliography{main}

\end{document}